\newcommand{\trans}{\mathsf{T}}
\newcommand{\R}{\mathbb{R}}
\newcommand{\bv}{\bm{v}}
\newcommand{\bx}{\bm{x}}
\newcommand{\bd}{\operatorname{blockdiag}}
\newcommand{\bp}{\operatorname{blockproj}}
\numberwithin{equation}{section}
\theoremstyle{plain}
\newtheorem{theorem}{Theorem}[section]
\newtheorem{lemma}[theorem]{Lemma}
\newtheorem{definition}[theorem]{Definition}
\newtheorem{conjecture}[theorem]{Conjecture}
\theoremstyle{remark}
\newtheorem{remark}[theorem]{Remark}
\newtheorem{assumption}[theorem]{Assumption}
\author{Aleksandr Pak$^{1,2}$, Justin Ko$^1$, Florent Krzakala$^2$}
\address{$^1$. \'Ecole Normale Sup\'erieure de Lyon, France, $^2$.\'Ecole Polytechnique F\'ed\'erale de Lausanne (EPFL),  IdePHICS Lab,   CH-1015 Lausanne, Switzerland}
\title[Optimal Algorithms for the Inhomogeneous Spiked Wigner Model]{Optimal Algorithms for the Inhomogeneous Spiked Wigner Model}
	\email{justin.ko@ens-lyon.fr,florent.krzakala@epfl.ch,aleksandr.pak@epfl.ch}
\begin{document}

\maketitle

\begin{abstract}%
  In this paper, we study a spiked Wigner problem with an inhomogeneous noise profile. Our aim in this problem is to recover the signal passed through an inhomogeneous low-rank matrix channel. While the information-theoretic performances are well-known, we focus on the algorithmic problem. We derive an approximate message-passing algorithm (AMP) for the inhomogeneous problem and show that its rigorous state evolution coincides with the information-theoretic optimal Bayes fixed-point equations. We identify in particular the existence of a statistical-to-computational gap where known algorithms require a signal-to-noise ratio bigger than the information-theoretic threshold to perform better than random. Finally, from the adapted AMP iteration we deduce a simple and efficient spectral method that can be used to recover the transition for matrices with general variance profiles. This spectral method matches the conjectured optimal computational phase transition.
\end{abstract}

\section{Introduction}
Low-rank information extraction from a noisy data matrix is a crucial statistical challenge. The spiked random matrix models have recently gained extensive interest in the fields of statistics, probability, and machine learning, serving as a valuable platform for exploring this issue \cite{donoho1995adapting,peche2014deformed,BBP,lesieur2017constrained}. A prominent example is the spiked Wigner model, where a rank one matrix is observed through a component-wise homogeneous noise.

Heterogeneity being a fundamental part of many real-world problems,  we consider here an inhomogeneous version of the Wigner spike model, discussed in \cite{AJFL_inhomo,behne2022fundamental}, where  the signal is observed through an inhomogeneous, block-constant noise. Consider a partition $\{ 1, \ldots, N \} \!=\! [N]$ into $q$ disjoint groups $C_{1}^{N}  \cup \cdots \cup C_{q}^{N}=[N]$. This partition is encoded by a function $g:\! [N]\! \mapsto \![q]$ which maps each index $i \in [N]$ into its group $g(i) \!\in\! [q]$.   Let $\tilde{\boldsymbol{\Delta}} \!\in\! \R^{q \times q}$ be a 
symmetric matrix encoding a block-constant symmetric matrix $\boldsymbol{\Delta}\! \in\! \R^{N \times N}$
\begin{equation}\label{eq:varprofile}
\boldsymbol{\Delta}_{ij} = \tilde{\boldsymbol{\Delta}}_{g(i)g(j)}.
\end{equation}
We observe the signal $\boldsymbol{x}^{\star} \in \mathbb{R}^{N}$ which is assumed to have independent identically distributed coordinates generated from some prior distribution $\mathbb{P} _{0}$ (i.e. $\mathbb{P} ( \boldsymbol{x}^{\star} = \boldsymbol{x} ) = \prod_{i = 1}^{N}\mathbb{P}_{0}  (x_{i}^{\star} = x_{i})$) through  noisy measurements:
\begin{equation} \label{eqn: low-rank inhomogenous}
		\boldsymbol{Y} = \sqrt{\frac{1}{N}}\boldsymbol{x}^{\star}(\boldsymbol{x}^{\star})^{T} + \boldsymbol{A} \odot \sqrt{\boldsymbol{\Delta}}.
\end{equation}
Here and throughout the article $\odot$ denotes the Hadamard product, $\sqrt{\boldsymbol{\Delta}}$ is the Hadamard square-root of $\boldsymbol{\Delta}$ and $\boldsymbol{A}$ is a real-valued symmetric GOE matrix with off-diagonal elements of unit variance.  The Bayes-optimal performance of this model in the asymptotic limit $N \to \infty$ was studied rigorously in \cite{AJFL_inhomo,behne2022fundamental,MourratXia-tensor, MourratXiaChen-tensor} who characterized the fundamental information-theoretic limit of reconstruction in this model. Here we focus instead on the algorithmic problem of reconstructing the (hidden) spike. {\bf Our contributions are many-fold}:
\begin{itemize}[wide=1pt, topsep=0pt,nosep]
    \item We show how one can construct an Approximate Message Passing (AMP) algorithm for the inhomogeneous Wigner problem, whose asymptotic performance can be tracked by a rigorous state evolution, generalizing the homogeneous version of the algorithm for low-rank factorization \cite{BayatiMontanari,DBLP:journals/corr/DeshpandeAM15,lesieur2017constrained}. 
    \item  AMP is shown to give Bayes optimal performances, as characterized in \cite{AJFL_inhomo}, for a wide choice of parameters. There exists, however, a region of parameters where AMP differs from Bayes performances, yielding a computational-to-statistical gap \cite{bandeira2018notes,celentano2020estimation}. In this region, we conjecture that the problem is hard for a large class of algorithms. 
    \item  Finally, we present a linear version of AMP \cite{maillard2022construction}, that turns out to be equivalent to a spectral method, which is optimal in the sense that it can detect the presence of the spike in the same region as AMP. This is quite remarkable since, as shown in \cite[Section~2.4]{AJFL_inhomo}, the standard spectral method (PCA) applied to a simple renormalization of the matrix fails to do so.
\end{itemize}

\paragraph{\textbf{Related work}}
The class of  approximate message passing algorithms (AMP) has attracted a lot of attention in the high-dimensional statistics and machine learning community, see e.g.  \cite{donoho2009message,BayatiMontanari,rangan2011generalized,DBLP:journals/corr/DeshpandeAM15,lesieur2017constrained,gerbelot2021graph,feng2022unifying}. The ideas behind this algorithm have roots in physics of spin glasses \cite{mezard1987spin,bolthausen2014iterative,zdeborova2016statistical}. AMP algorithms are optimal among first order methods \cite{celentano2020estimation}, thus their reconstruction threshold provides a bound on the algorithmic complexity  in our model. Our approach to the inhomogeneous version of AMP relies on several refinements of AMP methods to handle the full complexity of the problem, notably the 
spatial coupling technique \cite{krzakala2012probabilistic,donoho2013information,8205391,gerbelot2021graph}.

Factorizing low-rank matrices is a ubiquitous problem with many applications in machine learning and statistics, ranging from sparse PCA to community detection and sub-matrix localization. Many variants of the homogeneous problem have been studied in the high-dimensional limit \cite{DBLP:journals/corr/DeshpandeAM15,lesieur2017constrained,barbier2018rank,lesieur2017constrained,10.1214/19-AOS1826,lelargemiolanematrixestimation,barbier2020information}. The inhomogeneous version was discussed in details in \cite{AJFL_inhomo,behne2022fundamental}. Spectral methods are a very popular tool to solve rank-factorization problems \cite{donoho1995adapting,peche2014deformed,BBP}.  Using AMP as an inspiration for deriving new spectral methods was discussed, for instance, in \cite{saade2014spectral,lesieur2017constrained,aubin2019spiked,mondelli2018fundamental,mondelli2022optimal,maillard2022construction,venkataramanan2022estimation}.

\section{Main results}


\paragraph{\textbf{Message passing algorithm}} 
For each $t \geq 0$, let $(f_t^a)_{a \in [q]}$ be a collection of Lipschitz functions from $\R \to \R$, and define $f_{t}: \mathbb{R}^{N} \times \mathbb{N} \mapsto \mathbb{R}^{N}$ by
\[
f_t(\bm{x}) := ( f_t^{g(i)} (x_i) )_{i \in [N]} \in \R^N.
\]
These linear functions are often called denoiser functions and can be chosen amongst several options, such as the Bayes optimal denoisers for practical applications (see Section~\ref{sec:Bayes}), or even linear denoisers (see Section~\ref{sec:linear}). We shall consider the following AMP recursion for an estimator in the inhomogeneous spiked Wigner problem
\begin{equation}
\label{eqn: AMP spike}
\boldsymbol{x}^{t+1} = \left(\frac{1}{\sqrt{N}\bm{\Delta}} \odot \bm{Y}\right) f_{t}\left(\boldsymbol{x}^{t}\right)-\bm{\mathrm{b}}_{t} \odot f_{t-1}\left(\boldsymbol{x}^{t-1}\right)
\end{equation}
with the so-called Onsager term $\bm{\mathrm{b}}_{t} = \frac{1}{\bm{\Delta}} f_{t}^{\prime}(x^{t}) \in \R^N$ 
where $\frac{1}{\bm{\Delta}}$ is the Hadamard inverse of $\bm{\Delta}$ and $f_{t}^{\prime}$ is the vector of coordinate wise derivatives. 

In practical implementations, we initialize the algorithm with some non-null $\boldsymbol{x}^{0}$ and let it run for a certain number of iterations. One efficient way to do this is the spectral initialization \cite{mondelli2021approximate} with the method described in sec. \ref{sec:spec}. In Figure~\ref{fig:dense_general} we provide an example of the performance of the AMP together with the Bayes-optimal estimator predicted by the asymptotic theory. Even at very moderate sizes, the agreement is clear.

\paragraph{\textbf{State evolution}}
Our first main contribution  is the generalisation of the state evolution characterization of the behaviour of AMP \cite[Theorem~1]{8205391} in the inhomogeneous setting. To state a well-defined limit of the AMP, we have the following assumptions. 

\begin{assumption} \label{ass:limit}
To ensure that our inhomogeneous AMP has a well-defined limit, we assume that
\begin{enumerate}
    \item For each $a \in [q]$, we have
    \[
    \lim_{N \to \infty} \frac{|C_a^N|}{N} \to c_a \in (0,1).
    \]
    \item The family of real valued functions such that $(f_t^a)_{a \in [q]}$ and $(f_t^a)^\prime_{a \in [q]}$ are Lipschitz.
    \item For each $a \in [q]$, there exists $\left(\sigma^0_a\right)^{2} \in \R$ such that, in probability,
    \[
    \lim_{N \to \infty} \frac{1}{|C_a^N|} \sum_{i \in C_a^N} f_0^a(x^0_i) f_0^a(x^0_i) = \left(\sigma^0_a\right)^{2} .
    \]
\end{enumerate}
\end{assumption}

Our first result describes the distribution of the iterates in the limit. Our mode of convergence will be with respect to $L$-pseudo-Lipschitz test functions $\phi: \R^M \to \R$ satisfying
    \begin{equation}\label{eq:pseudoLip}
| \phi(x) - \phi(y)| \leq L (1 + \|x\| + \| y\|) \| x- y\| \qquad \text{for all $x,y \in \R^M$}.
    \end{equation}

We define the following state evolution parameters $\mu_{b}^{t}$ and $\sigma_{b}^{t}$ for $b \in [q]$ through the recursion
    \begin{equation}\label{eq:state}
        \begin{aligned} 
            & \mu^{t + 1}_{b} = \sum_{a \in [q]} \frac{c_{a}}{\tilde{\bm{\Delta}}_{ab}} \mathbb{E}_{x_{0}^{\star}, Z}[x_{0}^{\star} f_{t}^{a}\left(\mu^{t}_{a}x_{0}^{\star} + \sigma^{t}_{a}Z\right)] ~\text{with}~ x_{0}^{\star} \sim \mathbb{P}_{0}, Z \sim \mathcal{N} (0, 1) \\
            & (\sigma_{b}^{t + 1})^{2} = \sum_{a=1}^{q} \frac{c_{a}} {\tilde{\bm{\Delta}}_{ab}}\mathbb{E}_{x_{0}^{\star}, Z}\left[ (f_{t}^{a}(\mu^{t}_{a}x_{0}^{\star} +  \sigma_{a}^{t}Z))^{2}\right] ~\text{with}~ x_{0}^{\star} \sim \mathbb{P}_{0}, Z \sim \mathcal{N} (0, 1),
        \end{aligned}
    \end{equation}
where $x_0^\star$ and $Z$ are independent. We use the initialization $\bm{\mu}^0 = \bm{\sigma}^0 = 0$. We prove that the iterates $x_i^{t}$ are asymptotically equal in distribution to $\mu_{g(i)}^{t}x_{0}^{\star} + \sigma_{g(i)}^{t}Z$ where $x^\star_0$ and  $Z$ are independent.  
\begin{wrapfigure}{rt}{0.5\textwidth}
  \begin{center}
    \includegraphics[width=0.47\textwidth]{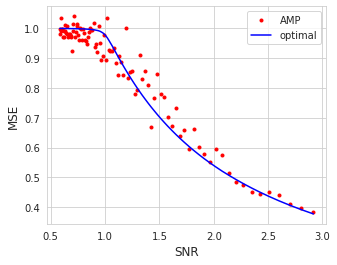} 
  \end{center}
  \vspace{-0.5cm}
  \caption{Performance of the inhomogeneous AMP algorithm against the information-theoretical optimal {\rm MMSE}. The variance profile is proportional to $\tilde{\bm{\Delta}} \!=\! \begin{bmatrix}
		    1 & 3\\
      3 & 2
		\end{bmatrix}$ 
with two equally sized blocks with standard Gaussian prior when $N \!=\! 500$ at various {\rm snr}. }
		\label{fig:dense_general}
    \vspace{-1.cm}
    \end{wrapfigure}

\begin{theorem}[State evolution of AMP iterates in the inhomogeneous setting] \label{theorem: AMP inhomogeneous}
    Suppose that Assumption~\ref{ass:limit} holds, and that $\mathbb{P}_0$ has bounded second moment. Let $\phi: \R^2 \to \R$ be a $L$-pseudo-Lipschitz test functions satisfying \eqref{eq:pseudoLip}. For any $a \in [q]$, the following limit holds almost surely
\[
\hspace{-9cm}\lim_{N \to \infty} \frac{1}{|C_a^N|} \sum_{i \in C_a^N} \phi( x_i^t, x_i^\star ) = \mathbb{E}_{x_{0}^{\star}, Z} \phi( \mu_{a}^{t}x_{0}^{\star} + \sigma_{a}^{t}Z, x_0^\star ) 
\]
where $Z$ is a standard Gaussian independent from all other variables.
\end{theorem}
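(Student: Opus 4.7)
My plan is to reduce the inhomogeneous iteration to the spatially coupled symmetric AMP framework of \cite{8205391}, which provides a state evolution for Wigner-type matrices with block-constant variance, and then separately handle the rank-one signal contribution by induction on $t$. First, I absorb the Hadamard division by $\boldsymbol{\Delta}$ into the noise by writing
\[
\tfrac{1}{\sqrt{N}\boldsymbol{\Delta}}\odot\boldsymbol{Y} \;=\; \tfrac{1}{N\boldsymbol{\Delta}}\odot\boldsymbol{x}^{\star}(\boldsymbol{x}^{\star})^{T} + \widetilde{\boldsymbol{W}}, \qquad \widetilde{\boldsymbol{W}} := \tfrac{1}{\sqrt{N}\sqrt{\boldsymbol{\Delta}}}\odot\boldsymbol{A},
\]
so that $\widetilde{\boldsymbol{W}}$ is a symmetric Gaussian matrix with block-constant entry variances $\Var(\widetilde{\boldsymbol{W}}_{ij}) = 1/(N\tilde{\boldsymbol{\Delta}}_{g(i)g(j)})$. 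The iteration \eqref{eqn: AMP spike} thus splits into a rank-one signal term, a noise term driven by $\widetilde{\boldsymbol{W}}$, and the Onsager correction.

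Second, I identify the deterministic signal drift. For $i\in C_{b}^{N}$, the signal contribution at step $t+1$ equals
\[
x_{i}^{\star}\sum_{a\in[q]}\tfrac{1}{\tilde{\boldsymbol{\Delta}}_{ab}}\cdot\tfrac{1}{N}\sum_{j\in C_{a}^{N}}x_{j}^{\star}f_{t}^{a}(x_{j}^{t}),
\]
and the induction hypothesis, applied to the pseudo-Lipschitz test function $(u,v)\mapsto vf_{t}^{a}(u)$, shows that each inner empirical mean converges almost surely to $c_{a}\mathbb{E}[x_{0}^{\star}f_{t}^{a}(\mu_{a}^{t}x_{0}^{\star}+\sigma_{a}^{t}Z)]$, reproducing precisely the claimed drift $\mu_{b}^{t+1}x_{i}^{\star}$. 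For the random part, I augment the recursion with the spike $\boldsymbol{x}^{\star}$ as an auxiliary coordinate and use the denoisers $\widetilde{f}_{t}^{a}(y,z):= f_{t}^{a}(y)$; after subtracting the drift, the residual iteration is driven only by $\widetilde{\boldsymbol{W}}$ and falls inside the symmetric spatially coupled AMP framework of \cite{8205391} with variance profile $1/\tilde{\boldsymbol{\Delta}}$ and Onsager coefficient $\tfrac{1}{\boldsymbol{\Delta}}f_{t}^{\prime}(\boldsymbol{x}^{t})$. Applying that theorem blockwise yields the Gaussian limit with
\[
(\sigma_{b}^{t+1})^{2} = \sum_{a=1}^{q}\tfrac{c_{a}}{\tilde{\boldsymbol{\Delta}}_{ab}}\mathbb{E}\bigl[(f_{t}^{a}(\mu_{a}^{t}x_{0}^{\star}+\sigma_{a}^{t}Z))^{2}\bigr],
\]
and combining the two contributions closes the induction.

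The main obstacle is twofold. First, one must verify that the prescribed Onsager term $\bm{\mathrm{b}}_{t}=\tfrac{1}{\boldsymbol{\Delta}}f_{t}^{\prime}(\boldsymbol{x}^{t})$ cancels the backward feedback from past iterates block-by-block; this reduces to the spatial-coupling Onsager computation of \cite{8205391} applied to the rescaled variance profile $1/\tilde{\boldsymbol{\Delta}}$, and the normalization $1/\boldsymbol{\Delta}$ (rather than $1/\sqrt{\boldsymbol{\Delta}}$) arising from our choice of driving matrix is crucial and must be tracked carefully through every block pair. Second, upgrading the one-dimensional blockwise convergence delivered by \cite{8205391} to the joint pseudo-Lipschitz convergence of the pair $(x_{i}^{t},x_{i}^{\star})$ uniformly across blocks requires the bounded second moment of $\mathbb{P}_{0}$ together with the Lipschitzness of $f_{t}^{a}$ and $(f_{t}^{a})^{\prime}$ from Assumption~\ref{ass:limit}, which I expect to carry out via the standard truncation and smoothing argument of Bayati--Montanari adapted to the block structure.
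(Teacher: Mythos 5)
Your high-level strategy matches the paper's: center the data matrix into a rank-one drift plus a variance-profiled Wigner matrix $\widetilde{\boldsymbol{W}}$, invoke the matrix AMP state evolution of \cite{8205391} for the centered noise, identify the deterministic drift $\mu^{t+1}_b x^\star_i$ through the induction hypothesis applied to $(u,v)\mapsto v f_t^a(u)$, and close the argument by induction on $t$. The paper implements the reduction to \cite{8205391} via the $\bd/\bp$ embedding and the variance-rescaled denoisers $\tilde f_t^i$, whereas you phrase it as applying the spatially coupled framework with variance profile $1/\tilde{\boldsymbol{\Delta}}$ directly and augmenting with $\boldsymbol{x}^\star$ as a side coordinate; these are essentially the same invocation.

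Where your write-up is too quick is the sentence ``after subtracting the drift, the residual iteration is driven only by $\widetilde{\boldsymbol{W}}$ and falls inside the \dots framework.'' At finite $N$, the residual $\boldsymbol{x}^t - \boldsymbol{\mu}^t\odot\boldsymbol{x}^\star$ is \emph{not} an AMP iterate: the rank-one term in the actual recursion produces the empirical overlap $\frac{1}{N}\sum_{j\in C_a^N}x_j^\star f_t^a(x_j^t)$, not its deterministic limit $c_a\mu^{t+1}_a$-type quantity, and the Onsager term is evaluated at $\boldsymbol{x}^t$ rather than at a centered iterate. The paper handles this by introducing a genuine auxiliary centered AMP $\boldsymbol{s}^t$ with shifted denoisers $f_t(\cdot + \boldsymbol{\mu}^t\odot\boldsymbol{x}^\star)$ --- to which Lemma~\ref{lem:state_evoluton_centered} applies exactly --- and then proving by strong induction the per-block $L^2$ estimate
\[
\frac{1}{|C_a^N|}\bigl\|\boldsymbol{s}_a^t + \mu_a^t\boldsymbol{x}_a^\star - \boldsymbol{x}_a^t\bigr\|_2^2 \longrightarrow 0,
\]
together with a second-moment bound on $\boldsymbol{s}_a^t + \mu_a^t\boldsymbol{x}_a^\star$. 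This $L^2$ comparison --- which involves an operator-norm bound on the block-structured noise, the LLN for the empirical overlap, and control of the difference in Onsager terms via Lipschitzness of $f_t'$ --- is the main content of Appendix~\ref{app:theostate}, and is also where the bounded second moment of $\mathbb{P}_0$ is used. Your plan gestures at this (``closes the induction,'' ``truncation and smoothing argument of Bayati--Montanari''), so the missing piece is one of articulation rather than a wrong route, but you should make the auxiliary iteration $\boldsymbol{s}^t$ and the $L^2$ induction explicit; without it the appeal to \cite{8205391} for the ``residual'' is not rigorous.
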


\begin{remark}
    The notion of convergence under the pseudo-Lipschitz test functions induces a topology that is equivalent to the one generated by the $2$-Wasserstein topology \cite[Remark~7.18]{feng2022unifying}. We can weaken the second moment assumption on $\mathbb{P}_0$ to finite $k$th moment, but the induced topology will then change to the $k$-Wasserstein topology, see \cite[Theorem~1]{8205391}.
\end{remark}

Even though the theoretical result above applies in the high-dimensional limit, numerical simulations show that even for medium-sized $N$ (around $500$), the behaviour of the iterates is well described by the state evolution parameters. Through the state evolution equations \eqref{eq:state} we are able to track the iterates of the AMP iteration with just two vectors of parameters obeying the state evolution recursion: the overlap with the true signal $(\mu_a^{t})_{a \in [q]}$ and its variance $(\sigma^{t}_a)_{a \in [q]}$. 
For the inhomogeneous AMP \eqref{eqn: low-rank inhomogenous} iteration we obtain the following necessary and sufficient condition for the overlaps of a fixed point of the iteration:

\begin{theorem}[Bayes-Optimal fixed point]\label{theorem: AMP fixed point}
Assume AMP satisfies Assumption~\ref{ass:limit} and let the denoising functions be the Bayes ones \eqref{eqn: Bayes-optimal functions}. Then the overlaps $\bm{\mu} = (\mu_a)_{a \in [q]}$ in \eqref{eq:state} satisfy the following fixed point equation
    \begin{equation} 
	\mu_{b} = \sum_{a \in[q]} \frac{c_{a}}{\tilde{\boldsymbol{\Delta}}_{ab}}\mathbb{E}_{x_{0}^{\star}, Z}[x_{0}^{\star}\mathbb{E}_{posterior}[x_{0}^{\star}|\mu_{a}x_{0}^{\star} + \sqrt{\mu_{a}}Z]].
\end{equation}

\end{theorem}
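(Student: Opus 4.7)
The plan is to derive the Bayes-optimal fixed-point equation by exploiting the Nishimori-type identity satisfied by conditional-expectation denoisers: this identity forces the state-evolution parameters to satisfy $\sigma_a^2 = \mu_a$, which then collapses the state evolution \eqref{eq:state} into the claimed form.

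First I would invoke Theorem~\ref{theorem: AMP inhomogeneous} to interpret the iterate $x_i^t$ for $i \in C_a^N$ as an effective Gaussian channel observation $\mu_a^t x_0^\star + \sigma_a^t Z$ of the signal. With respect to this effective channel, the Bayes-optimal denoiser of \eqref{eqn: Bayes-optimal functions} is by definition the posterior mean
\[
f_t^a(y) \;=\; \mathbb{E}\bigl[x_0^\star \,\big|\, \mu_a^t x_0^\star + \sigma_a^t Z = y\bigr].
\]
The tower property of conditional expectation immediately yields the Nishimori-type identity
\[
\mathbb{E}_{x_0^\star,Z}\!\bigl[x_0^\star\, f_t^a(\mu_a^t x_0^\star + \sigma_a^t Z)\bigr] \;=\; \mathbb{E}_{x_0^\star,Z}\!\bigl[f_t^a(\mu_a^t x_0^\star + \sigma_a^t Z)^2\bigr],
\]
since $\mathbb{E}[x_0^\star\,\mathbb{E}[x_0^\star\mid y]] = \mathbb{E}[\mathbb{E}[x_0^\star\mid y]^2]$. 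Comparing this with the two lines of \eqref{eq:state} shows that $\mu_b^{t+1} = (\sigma_b^{t+1})^2$ for every $t$ and every $b \in [q]$. At a fixed point, where $(\mu^{t+1},\sigma^{t+1}) = (\mu^{t},\sigma^{t}) = (\mu,\sigma)$, this forces $\sigma_a = \sqrt{\mu_a}$. Plugging $\sigma_a = \sqrt{\mu_a}$ back into the first line of \eqref{eq:state} and identifying $f_a$ with the posterior mean $\mathbb{E}_{posterior}[x_0^\star \mid \mu_a x_0^\star + \sqrt{\mu_a}\,Z]$ then yields exactly the fixed-point equation stated in the theorem.

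The main subtlety I anticipate is the self-consistent definition of the denoiser along the trajectory: for the Nishimori-type identity above to be applicable at step $t$, the function $f_t^a$ must coincide with the posterior mean for the channel whose parameters $(\mu_a^t,\sigma_a^t)$ are themselves produced by the state evolution. Under the standard convention that Bayes-optimal AMP uses exactly this adaptive denoiser, the remaining argument is purely algebraic. The potentially degenerate initialization $\bm{\mu}^0 = \bm{\sigma}^0 = 0$ is harmless for this theorem because the statement concerns only the algebraic structure of fixed points, not reachability from the zero initialization, which is the role of the spectral warm-start discussed in Section~\ref{sec:spec}.
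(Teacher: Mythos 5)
Your proposal is correct and follows essentially the same route as the paper: both invoke the Nishimori identity (the tower-property argument you give is exactly the content of the identity the paper cites), deduce $\mu_b^{t+1} = (\sigma_b^{t+1})^2$ from the two lines of the state evolution, and substitute $\sigma_a = \sqrt{\mu_a}$ at a fixed point to obtain the claimed equation.
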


\begin{remark}
	The state evolution fixed point equation above coincides with the fixed point equation satisfied by the Bayes optimal estimator in \cite[Equation~2.14]{AJFL_inhomo}.
\end{remark}

\begin{figure}[t]
	\begin{center}
        \includegraphics[width = 10cm]{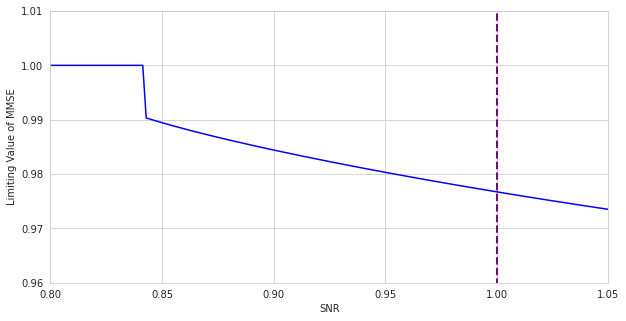}
		\caption{The information-theoretic optimal mean squared error for a sparse prior. The AMP detectability transition occurs at $\lambda(\bm{\Delta}) = 1$ in Section~\ref{sec:linear}, which will yield a statistical-to-computational gap. This is in contrast to the continuous phase transition in Figure~\ref{fig:dense_general}, where the AMP phase transition agrees with the optimal one}
		\label{fig:optimal}
	\end{center}
     \vspace{-0.5cm}
\end{figure}

\paragraph{\textbf{A spectral method}}
The spectrums of matrices with variance profiles are difficult to analyze because standard tools to compute the BBP transition result in complicated systems of equations. Given the matrix $\bm{Y}$ defined in \eqref{eqn: low-rank inhomogenous} we consider the transformed matrix
\begin{equation}\label{eq:transformedYgenvariance}
	\tilde {\bm{Y}} :=  \frac{\mathbb{E}_{x_0^\star} [(x_0^\star)^2]}{\sqrt{N}\bm{\Delta}} \odot \bm{Y} - \mathbb{E}_{x_0^\star} [(x_0^\star)^2]^2 \operatorname{diag}\left(\frac{1}{\bm{\Delta}} \begin{bmatrix}
		1 \\
		\vdots \\
		1
	\end{bmatrix} \right) .
\end{equation}
Using AMP tools, we are able to analyze the spectral method. In particular, we are able to recover the phase transition for the top eigenvalue of spiked matrices with covariance profiles through the inhomogeneous AMP. Let $\bm{c} = (c_a)_{a \in [q]}$. We define the inhomogeneous signal-to-noise (SNR) ratio of such a model by
\begin{equation} \label{eq:SNR}
{\rm SNR}({\bm \Delta}) := \lambda(\bm{\Delta}) =  \mathbb{E}_{x_0^\star} [(x_0^\star)^2]^2 \left\| \operatorname{diag}(\sqrt{\bm{c}}) \frac{1}{\tilde{\bm{\Delta}}} \operatorname{diag}(\sqrt{\bm{c}}) \right\|_{op}.
\end{equation}

\begin{conjecture}\label{conjecture:BBP}
The top eigenvalue of $\tilde {\bm{Y}}$ separates from the bulk if and only if the signal to noise ratio $\lambda(\bm{\Delta}) > 1$. In particular, if $\bm{\hat{x}}$ is the top eigenvector of $\tilde{\bm{Y}}$  then 
\[
\lim_{N \to \infty} \frac{|\bm{\hat{x}} \cdot \bm{x}^\star|}{ \|\bm{\hat{x}} \| \|\bm{x}^\star \| }  = 0
\]
if and only if $\lambda(\bm{\Delta})  < 1$.
\end{conjecture}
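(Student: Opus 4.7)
The strategy is to realize the spectral method on $\tilde{\bm{Y}}$ as the fixed-point equation of a \emph{linear} AMP iteration, and to read off the BBP-type transition from the resulting state-evolution recursion via Theorem~\ref{theorem: AMP inhomogeneous}. Concretely, I would take the denoisers to be the constant linear maps $f_t^a(x) = \mathbb{E}_{x_0^\star}[(x_0^\star)^2]\, x$ (so that $\bm{b}_t$ is a fixed vector) and check that the AMP recursion \eqref{eqn: AMP spike} becomes, up to a rank-one correction that is negligible in the iterates,
\[
\bm{x}^{t+1} \;=\; \tilde{\bm{Y}}\, \bm{x}^{t} \;-\; \bm{\mathrm{b}}\odot \bm{x}^{t-1},
\]
i.e.\ a three-term power iteration on $\tilde{\bm{Y}}$ (the diagonal correction in \eqref{eq:transformedYgenvariance} is precisely the contribution absorbed by the Onsager term under this choice of $f_t$). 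This is the standard AMP-to-spectral correspondence exploited in~\cite{maillard2022construction,mondelli2022optimal}, and it should apply verbatim here once the inhomogeneous Onsager term is handled group-by-group.

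The next step is to specialize the recursion \eqref{eq:state} to these linear denoisers. Then $\bm{\mu}^{t+1}$ and $(\bm{\sigma}^{t+1})^2$ decouple from the prior and evolve by the same linear map on $\R^q$, namely
\[
\mu_b^{t+1} \;=\; \mathbb{E}_{x_0^\star}[(x_0^\star)^2]\, \sum_{a \in [q]} \frac{c_a}{\tilde{\bm{\Delta}}_{ab}}\, \mu_a^{t},
\]
and likewise for $(\sigma_b^{t})^2$. After the change of variables $\nu_a^t := \mu_a^t/\sqrt{c_a}$ this reads $\bm{\nu}^{t+1} = M\bm{\nu}^{t}$ with $M := \mathbb{E}_{x_0^\star}[(x_0^\star)^2]\,\operatorname{diag}(\sqrt{\bm{c}})\,\tilde{\bm{\Delta}}^{-1}\,\operatorname{diag}(\sqrt{\bm{c}})$. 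The overlap of the AMP iterates with $\bm{x}^\star$ therefore grows (respectively decays) as $t\to\infty$ if and only if the spectral radius of $M$ exceeds $1$, which, after absorbing the additional factor of $\mathbb{E}_{x_0^\star}[(x_0^\star)^2]$ coming from the initialization correlation, is exactly the condition $\lambda(\bm{\Delta})>1$ in \eqref{eq:SNR}.

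To convert this AMP statement into the conjectured spectral statement, I would use a two-step argument. For the \emph{if} direction, initialize the linear AMP at $\bm{x}^{0}$ correlated with $\bm{x}^\star$; by the previous step the normalized iterates grow at a geometric rate governed by $\|M\|_{op}$, and standard arguments (orthogonal decomposition of $\bm{x}^{0}$ along eigenvectors of $\tilde{\bm{Y}}$, together with the fact that power iteration aligns with the top eigenvector) then show that the top eigenvalue of $\tilde{\bm{Y}}$ exceeds the bulk and that its eigenvector has nontrivial overlap with $\bm{x}^\star$. For the \emph{only if} direction on the correlation of the eigenvector, I would use that AMP is Bayes-optimal at $\lambda(\bm{\Delta})<1$ (Theorem~\ref{theorem: AMP fixed point}) in the regime where the linear SE is subcritical: no first-order method, and in particular no spectral estimator obtained as a polynomial in $\tilde{\bm{Y}}$ acting on a random start, can achieve positive overlap, by the lower bounds of~\cite{celentano2020estimation} combined with a contiguity argument against the null model $\bm{x}^\star=0$.

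The \textbf{main obstacle} is the \emph{only if} direction of the eigenvalue separation, which is not implied by the AMP analysis alone: one needs to show that when $\lambda(\bm{\Delta})<1$ the top eigenvalue of $\tilde{\bm{Y}}$ actually sticks to the bulk edge, and for that one must control the edge of the bulk of an inhomogeneous Wigner-type matrix with the additive diagonal shift in \eqref{eq:transformedYgenvariance}. The cleanest route is to appeal to the quadratic vector equation formalism of Ajanki--Erdős--Krüger for matrices with variance profiles, derive the Dyson equation for the resolvent of $\tilde{\bm{Y}}$, and show that its edge coincides precisely with the threshold $\|M\|_{op}=1$; a BBP-type perturbation argument then finishes the proof. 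This analytic input is exactly what separates the present result from a routine consequence of Theorem~\ref{theorem: AMP inhomogeneous} and is the reason the statement is presented as a conjecture.
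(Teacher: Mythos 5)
The statement in question is explicitly labeled a \emph{conjecture} in the paper; the paper does not prove it. What the paper provides (Section~\ref{sec:spec}) is a heuristic justification: take identity denoisers in the inhomogeneous AMP, observe that any fixed point of the linear iteration \eqref{eqn: AMP identity denosing} must be an eigenvector of $\tilde{\bm{Y}}$ with eigenvalue one, reduce the state evolution \eqref{eq:state} to the linear map $\operatorname{diag}(\sqrt{\bm{c}})\bm{\mu}^{t+1} = \bigl(\operatorname{diag}(\sqrt{\bm{c}})\tfrac{1}{\tilde{\bm{\Delta}}}\operatorname{diag}(\sqrt{\bm{c}})\bigr)\operatorname{diag}(\sqrt{\bm{c}})\bm{\mu}^{t}$, and identify the contraction/expansion threshold $\lambda(\bm{\Delta})=1$ with the expected BBP transition. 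Your proposal takes essentially the same route, and you correctly identify the genuine obstruction — a rigorous control of the bulk edge of $\tilde{\bm{Y}}$ and a BBP perturbation argument for inhomogeneous Wigner matrices — as the reason the statement is left as a conjecture. Your suggestion to attack the edge via the Dyson/quadratic vector equation formalism of Ajanki--Erd\H{o}s--Kr\"uger is a sensible concrete route that the paper does not mention.

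A few local inaccuracies worth correcting. First, the passage from the AMP iteration to the matrix $\tilde{\bm{Y}}$ is exact and involves no ``rank-one correction that is negligible'': with linear denoisers the Onsager term is the constant vector $\bm{\mathrm{b}} = \tfrac{1}{\bm{\Delta}}\bm{1}$, the recursion becomes $\bm{x}^{t+1} = \tilde{\bm{Y}}\bm{x}^{t} + \bm{\mathrm{b}}\odot(\bm{x}^{t}-\bm{x}^{t-1})$, and the fixed-point equation is precisely the eigenvector equation $\tilde{\bm{Y}}\bm{x}=\bm{x}$; the diagonal shift in \eqref{eq:transformedYgenvariance} is the Onsager term, not a low-rank remainder. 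Second, with $f_t^a(r)=\gamma r$ and $\gamma=\mathbb{E}[(x_0^\star)^2]$ the state evolution gives $\mu_b^{t+1}=\gamma^2\sum_a \tfrac{c_a}{\tilde{\bm{\Delta}}_{ab}}\mu_a^t$ (a factor $\gamma^2$, not $\gamma$), which already matches the $\mathbb{E}[(x_0^\star)^2]^2$ appearing in \eqref{eq:SNR} without any extra ``initialization'' factor to absorb; the paper avoids this bookkeeping by the normalization of Remark~\ref{rem:scaling}. Third, the symmetrizing change of variables is $\nu_a = \sqrt{c_a}\,\mu_a$, not $\mu_a/\sqrt{c_a}$, and the map $M$ involves the Hadamard inverse $\tfrac{1}{\tilde{\bm{\Delta}}}$, not the matrix inverse $\tilde{\bm{\Delta}}^{-1}$. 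Finally, your ``only if'' argument for vanishing eigenvector overlap — combining Theorem~\ref{theorem: AMP fixed point} with the first-order lower bounds of \cite{celentano2020estimation} and contiguity — overstates what is available: the state evolution shows that the \emph{linear AMP} overlap contracts to zero when $\lambda(\bm{\Delta})<1$, but the paper does not claim a lower bound covering all spectral estimators, and closing that gap is part of what makes the statement a conjecture rather than a theorem.
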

This matches precisely the recovery transition in \cite[Lemma~2.15 Part (b)]{AJFL_inhomo}. In this paper, we rigorously show that with ${\rm SNR}({\bm \Delta}) < 1$ our proposed spectral method fails to recover the signal. We conjecture that this is the sharp transition for spectral methods.

We postpone a full mathematical analysis of this spectral method for future studies. However, we provide indications for validity of the result by using the linear AMP formalism. The connection between the two is a standard phenomenon \cite{saade2014spectral,mondelli2018fundamental,maillard2022construction}. We illustrate the eigenvalue BBP-like transition in  Fig.\ref{fig:spectrum}.


\begin{figure}[t]
	\begin{center}
        \includegraphics[width=7cm]{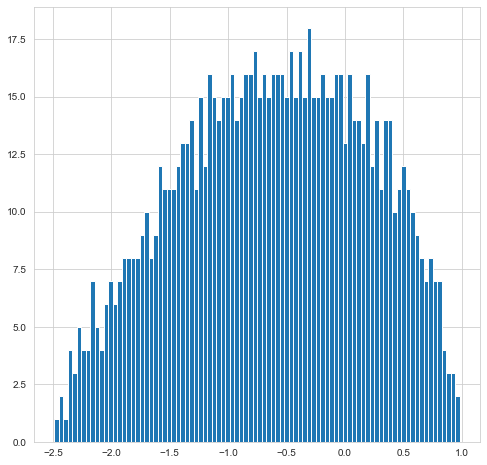}
        		\includegraphics[width=7cm]{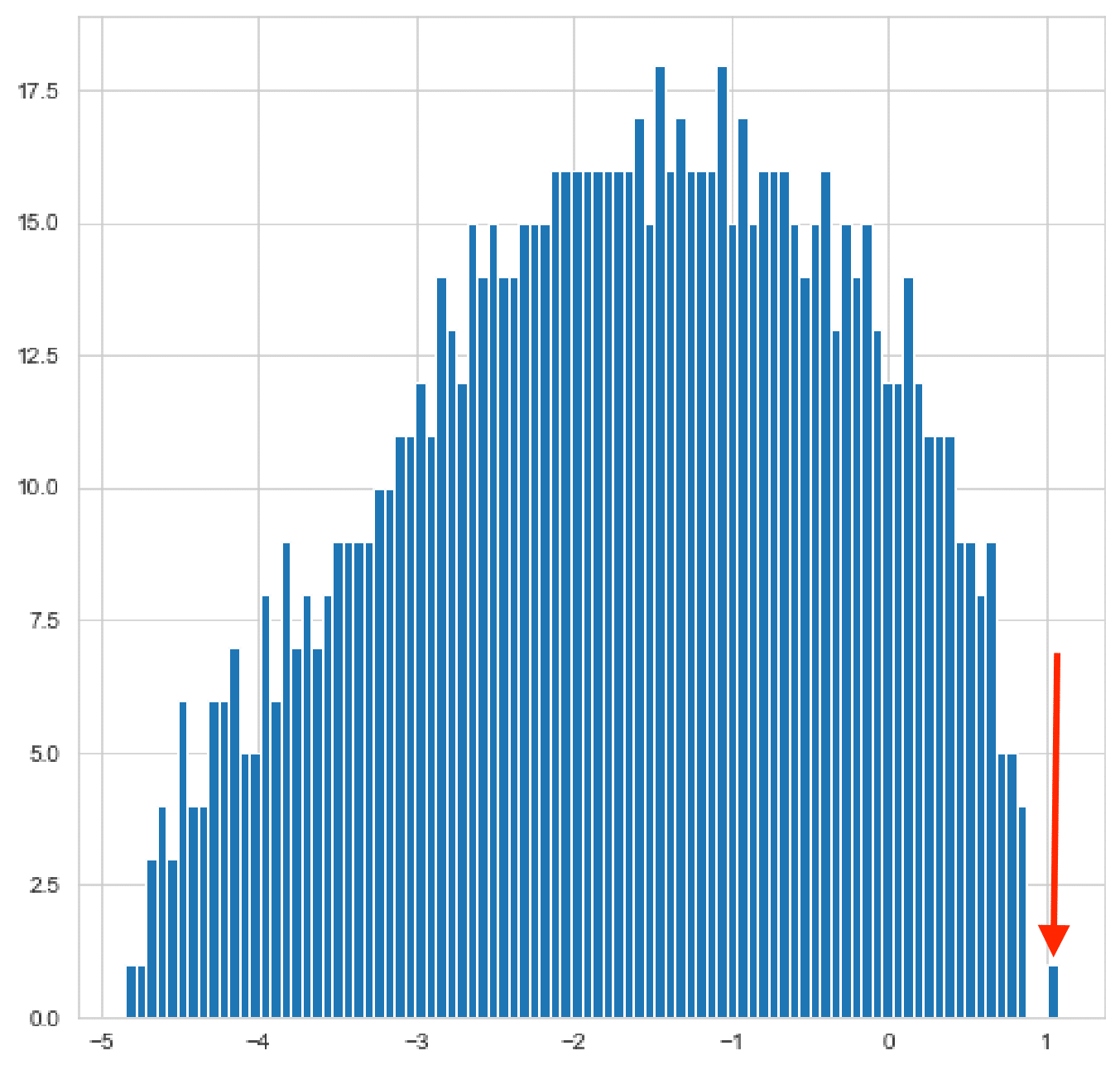}

		\caption{Illustration of the spectrum of $\bm{\tilde Y} \in \R^{10^3 \times 10^3}$ evaluated at noise profiles with {\rm snr} $\lambda(\bm{\Delta}) = 0.7$ (left, before the transition) and $1.8$ (right, after the transition), with the outlying eigenvector correlated with the spike arises at eigenvalue one.}
  \label{fig:spectrum}
	\end{center}
\end{figure}

\paragraph{\textbf{Statistical to computational gaps}}As the linear AMP transition arises at $\lambda\!>\!1$, the linear stability analysis of AMP initialized close to a trivial fixed point will recover an identical transition. As in the homogeneous case, the inhomogeneous problem is thus algorithmically tractable only for $\lambda\!>\!1$. However, it was shown that, for sparse enough priors, the Bayes estimate (that is possibly NP-hard) can achieve a positive correlation for $\lambda\!<\!1$. This illustrates the statistical-to-computational gap as in e.g. \cite{bandeira2018notes,barbier2018rank,aubin2019spiked,lelargemiolanematrixestimation,celentano2020estimation}. In this situation, the spectral method described in this work should thus be optimal. Interestingly, this is {\it not} the case of the standard PCA analysis based on the matrix $\bm{Y}$, which fails to achieve  a transition at $\lambda\!=\!1$ \cite[Proposition~2.18]{AJFL_inhomo}.



\section{The inhomogeneous AMP algorithm}\label{sec:AMP}

In this section, we derive the formula for the inhomogeneous AMP iteration \eqref{eqn: AMP spike}. We first recall the general matrix framework of AMP from \cite{8205391}:

\paragraph{\textbf{Matrix AMP}}\label{sec:matrixAMP}
	In the matrix setting an AMP algorithm operates on the vector space $\mathcal{V}_{q, N} \equiv\left(\mathbb{R}^{q}\right)^{N} \simeq \mathbb{R}^{N \times q}$. Each element of $\bv = (v_1, \dots, v_N) \in \mathcal{V}_{q, N}$ will be regarded as $N$ - vector with entries $v_i \in \mathbb{R}^{q}$.
	\begin{definition}[AMP]\label{def:AMP}
		A matrix AMP acting on this space is represented by $(\boldsymbol{A}, \mathcal{F}, \boldsymbol{v}^{0})$, where:
		\begin{enumerate}
			\item $\boldsymbol{A} = \boldsymbol{G} + \boldsymbol{G}^\trans$, where $\boldsymbol{G} \in \R^{N \times N}$ has iid entries $G_{ij} \sim N(0,\frac{1}{2})$. 
			\item $\mathcal{F}$ is a family of $N$ Lipschitz functions $f_t^i: \R^q \mapsto \R^q$ indexed by time $t$. The family $\mathcal{F}$ encodes a function $f_{t}: \mathcal{V}_{q, N} \rightarrow \mathcal{V}_{q, N}$ that acts separately on each coordinate $v_{j} \in \mathbb{R}^{q}$,
			\begin{equation}
				f_{t}(\boldsymbol{v}) = (
				f_{t}^{1}(v_{1}), \dots,     f_{t}^{N}(v_{N})
				\in \mathcal{V}_{q, N}.
			\end{equation}
			\item $\boldsymbol{v}^{0} \in \mathcal{V}_{q, N}$ is a starting condition.
		\end{enumerate}
	\end{definition}

The algorithm itself is a sequence of iterates generated by: 
\begin{equation} \label{eqn: Matrix MAMP}
	\boldsymbol{v}^{t+1}= \frac{\boldsymbol{A}}{\sqrt{N}} f_{t}\left(\boldsymbol{v}^{t}\right) - f_{t-1}\left(\boldsymbol{v}^{t-1}\right)\bm{\mathrm{B}}_{t}^{T}
\end{equation}
where $\bm{\mathrm{B}}_{t}$ is the $q \times q$ Onsager matrix given by
\begin{equation} \label{eqn: Onsager matrix}
	\bm{\mathrm{B}}_{t} = \frac{1}{N} \sum_{j = 1}^{N} \partial f_{t}^{j} (\boldsymbol{x}_{j}^{t}).
\end{equation}
where $\partial f_{t}^{j}$ denotes the Jacobian matrix of $f_t^j$.  The limiting properties of the AMP sequences are well known and can be found in \cite[Theorem~1]{8205391}.
\paragraph{\textbf{The inhomogeneous AMP}} \label{sec:inhomoAMP}
We now define an inhomogeneous AMP iteration which takes into account the block-constant structure of the noise:
\begin{definition}\label{def:inhomoAMP}
	An inhomogeneous AMP on $\mathcal{V}_{1,N} = \R^N$ is represented by $( \boldsymbol{A}, \mathcal{F}, \boldsymbol{x}^{0},\bm{\Delta})$, where the terms $\boldsymbol{A}$, $\mathcal{F}$, $\boldsymbol{x}^{0}$  are defined in Definition~\ref{def:AMP}  and $\bm{\Delta}$ is the $N \times N$ variance profile encoded by $\tilde{ \bm{\Delta}} \in \R^{q \times q}$ and grouping $g: [N] \to [q]$ defined by \eqref{eq:varprofile}. We further assume that the family of functions $\mathcal{F}$ is encoded by functions $f_t^a: \R \mapsto \R $ for $a \in [q]$ which define the group dependent function
    \begin{equation}
				f_{t}(\boldsymbol{x}) = (
				f_{t}^{g(1)}(x_{1}), \dots,     f_{t}^{g(N)}(x_{N}) )
				\in  \R^N.
			\end{equation}
\end{definition}

The sequence of iterates $\boldsymbol{x}^{t} \in \mathbb{R}^{N}$ of the $( \boldsymbol{A}, \mathcal{F}, \boldsymbol{v}^{0},\bm{\Delta})$ are defined as follows:
\begin{equation} \label{eqn: version}
	\boldsymbol{x}^{t+1}= \left( \frac{1}{\sqrt{N} \sqrt{\boldsymbol{\Delta}}} \odot \boldsymbol{A} \right) f_{t} \left(\boldsymbol{x}^{t}\right)-\mathrm{\boldsymbol{b}}_{t} \odot f_{t-1} \left(\boldsymbol{x}^{t-1}\right),
\end{equation}
where $\frac{1}{\sqrt{\bm{\Delta}}}$ is the Hadamard inverse square root of the noise,  and the Onsager term $\boldsymbol{\mathrm{b}}_{t}$ has the following form
\begin{equation} \label{eqn: Onsager vector}
	\boldsymbol{\mathrm{b}}_{t} = \frac{1}{N} \begin{pmatrix}
		\frac{1}{\Delta_{11}} (f_{t}^{g(1)})^{\prime} (x_{1}^{t}) &+&  \ldots &+& \frac{1}{\Delta_{1N}} (f_{t}^{g(N)})^{\prime} (x_{N}^{t})\\
		\vdots && \vdots && \vdots \\
		\frac{1}{\Delta_{N1}} (f_{t}^{g(1)})^{\prime} (x_{1}^{t}) &+&  \ldots &+& \frac{1}{\Delta_{NN}} (f_{t}^{g(N)})^{\prime} (x_{N}^{t})
	\end{pmatrix} = \frac{1}{N\boldsymbol{\Delta}} f_{t}^{'} (x^{t}) \in \R^N	.
\end{equation}
In order to track the iterates of the recursion \eqref{eqn: version} we reduce this recursion to the matrix setting with an embedding of the inhomogeneous AMP into the matrix AMP. 

\paragraph{\textbf{State evolution of the inhomogeneous AMP}} Through a continuous embedding, we will reduce our inhomogeneous AMP to the matrix AMP framework, and recover the state evolution of the inhomogeneous AMP. We define the diagonal matrix operator $\operatorname{blockdiag}: \R^N \mapsto \mathcal{V}_{q, N}$ which outputs a block diagonal matrix according to the block structure of our discretization of $[N]$:
\[
\operatorname{blockdiag}(\bv) = \bm{M} \quad \text{where} \quad M_{ij} = \begin{cases}
v_{j} & g(j) = i
\\    0 & \text{otherwise} .
\end{cases}
\]
Likewise, we define the projection operator $\operatorname{blockproj}: \mathcal{V}_{q, N} \mapsto \R^N$ which extracts a vector of size $N$ from a $N \times q$ according to the block structure of $[N]$ by
$$\bp(\boldsymbol{M}) = (M_{ig(i)})_{i \leq N} \in \R^N.$$
Under these changes of variables, we define
\[
\boldsymbol{r}^{t} = \bd(\boldsymbol{x}^{t}) \in \mathcal{V}_{q, N} \quad \text{ for $t \geq 0$}
\]
and $\tilde{f_{t}}$: $(\mathbb{R}^{q})^{N} \mapsto (\mathbb{R}^{q})^{N}$ by
\begin{equation} 
	\left(\tilde{f}_{t}(\boldsymbol{r}^{t})\right)_{ij} = \frac{1}{\sqrt{\tilde{\Delta}_{g(i)j}}}f_{t}^{g(i)}(x_{i}^{t}) \qquad \text{for $i,j \in [N] \times [q]$.}
\end{equation}
We encode the family of functions $\tilde f_t$ by $\tilde{\mathcal{F}}(\bm{\Delta})$. 

\begin{lemma}
Let $\bx^t$ be iterates from the AMP $(\boldsymbol{A}, \mathcal{F}, \boldsymbol{v}^{0},\bm{\Delta})$. Then the iterates $\bm{r}^t :=\bd(\bx^t)$ follow the generalized matrix AMP $(\bm{A}, \tilde{\mathcal{F}}(\bm{\Delta}), \bm{r}^0)$.  
\end{lemma}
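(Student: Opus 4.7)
The plan is to proceed by induction on $t$. The base case $\bm{r}^0 = \bd(\bx^0)$ holds by the chosen initialization. For the inductive step, I assume the claim at times $t$ and $t-1$ and verify it at time $t+1$ by reading off the ``diagonal'' entries $[\bm{r}^{t+1}]_{i,g(i)}$ of the matrix AMP recursion \eqref{eqn: Matrix MAMP} and matching them against the inhomogeneous update \eqref{eqn: version} for $x_i^{t+1}$. This is the only information that matters, because $\tilde{f}_t^j$ depends on its argument $\bm{r}_j\in\R^q$ only through the single coordinate $r_{j,g(j)}=x_j^t$, so only those diagonal entries ever feed back into subsequent applications of $\tilde{f}$.

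The first technical ingredient is the Jacobian of $\tilde{f}_t^j$. Because $\tilde{f}_t^j$ uses only the $g(j)$-th input, $\partial \tilde{f}_t^j(\bm{r}_j^t)$ is rank one and concentrated in the single column indexed by $g(j)$, with $a$-th row entry $(f_t^{g(j)})^\prime(x_j^t)/\sqrt{\tilde{\Delta}_{g(j)a}}$. Averaging over $j\in[N]$ and grouping by block yields the explicit Onsager matrix
\[
[\bm{B}_t]_{ab}=\frac{1}{N\sqrt{\tilde{\Delta}_{ba}}}\sum_{j\in C_b^N}(f_t^b)^\prime(x_j^t).
\]

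Next I would expand $[\tfrac{\bm{A}}{\sqrt{N}}\tilde{f}_t(\bm{r}^t)]_{i,g(i)}$ entry-wise; the key identity $\tilde{\bm{\Delta}}_{g(k)g(i)}=\bm{\Delta}_{ki}$ from \eqref{eq:varprofile} combined with the symmetry $\bm{\Delta}_{ki}=\bm{\Delta}_{ik}$ collapses it to $[(\tfrac{1}{\sqrt{N\bm{\Delta}}}\odot\bm{A})f_t(\bx^t)]_i$, which is exactly the linear part of \eqref{eqn: version}. For the Onsager correction I would compute $[\tilde{f}_{t-1}(\bm{r}^{t-1})\bm{B}_t^\trans]_{i,g(i)}$: the symmetry of $\tilde{\bm{\Delta}}$ lets the two $1/\sqrt{\tilde{\Delta}}$ factors (one coming from $\tilde{f}_{t-1}$, one from $\bm{B}_t$) fuse into $1/\tilde{\Delta}_{g(i),k}$, and the ensuing double sum over groups $k$ and cluster elements $j\in C_k^N$ collapses back into a single sum over $j\in[N]$ weighted by $1/\bm{\Delta}_{ij}$. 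The resulting scalar is precisely $[\bm{b}_t]_i\,f_{t-1}^{g(i)}(x_i^{t-1})$ with $\bm{b}_t$ as in \eqref{eqn: Onsager vector}, completing the match.

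The argument is entirely algebraic; no limit is taken and no probability is used. The main obstacle I anticipate is purely combinatorial: juggling the three index sets $[N]$ (elements), $[q]$ (groups), and the product structure underlying $\mathcal{V}_{q,N}$, and verifying that the two $1/\sqrt{\tilde{\bm{\Delta}}}$ factors combine---via the symmetry of $\tilde{\bm{\Delta}}$---into the single $1/\bm{\Delta}$ appearing in \eqref{eqn: Onsager vector}. Once this bookkeeping is arranged cleanly, the entry-wise agreement at position $(i,g(i))$ for every $i\in[N]$ is what identifies $\bm{r}^{t+1}$ with $\bd(\bx^{t+1})$ at the information-carrying entries, which are all that drive the subsequent iterations.
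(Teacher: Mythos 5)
Your proof is correct and takes essentially the same approach as the paper: compute the Jacobian of $\tilde f_t^i$ (rank one, supported on column $g(i)$), read off the Onsager matrix from \eqref{eqn: Onsager matrix}, and match entries at position $(i,g(i))$ against \eqref{eqn: version}, noting that only those entries feed back through $\tilde f_t$. One remark: your Onsager formula $[\bm{B}_t]_{ab}=\frac{1}{N\sqrt{\tilde{\Delta}_{ba}}}\sum_{j\in C_b^N}(f_t^b)'(x_j^t)$ is actually the correct one, while the paper's \eqref{eqn: Onsager term change of variables} has the roles of $a$ and $b$ swapped in the summation index and in the superscript of $f_t$ — a typo that is harmless because $\tilde{\bm{\Delta}}$ is symmetric and the projected product $\bp(\tilde f_{t-1}(\bm{r}^{t-1})\bm{B}_t^\trans)$ still collapses, as you show, to $\bm{\mathrm{b}}_t\odot f_{t-1}(\bm{x}^{t-1})$.
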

\begin{proof} 
 We will show that the projection of the iterates $\bm{r}^t$ from $(\bm{A}, \tilde{\mathcal{F}}(\bm{\Delta}), \bm{r}^0)$ are the iterates from  $(\boldsymbol{A}, \mathcal{F}, \boldsymbol{v}^{0},\bm{\Delta})$. It is easy to check that
	\begin{equation}
		\frac{\boldsymbol{A}}{\sqrt{N}  \sqrt{\boldsymbol{\Delta}}} f_{t}(\boldsymbol{x}^{t}) = \bp\left(\frac{\boldsymbol{A}}{\sqrt{N} }\tilde{f}_{t}(\boldsymbol{r}^{t})\right) .
	\end{equation}
Next, notice that Jacobian is given by
$$\partial \tilde{f}_{t}^{i}(\bm{r}_i^t) =
\left[\begin{array}{ccccc}
	0 & \cdots & \frac{1}{\sqrt{\tilde{\Delta}_{g(i)1}}}(f^{g(i)}_{t})^{\prime} (x_{i}^{t}) & \cdots & 0 \\
	\vdots & \ddots & \vdots & \ddots &\vdots \\
	0 & \cdots & \frac{1}{\sqrt{\tilde{\Delta}_{g(i)q}}}(f^{g(i)}_{t})^{\prime} (x_{i}^{t}) &\cdots & 0
\end{array}\right],$$
which is a matrix where only the column number $g(i)$ has non-zero elements.  Applying \eqref{eqn: Onsager matrix}, we thus for $a, b \in [q] \times [q]$
\begin{equation} \label{eqn: Onsager term change of variables}
	\left(\boldsymbol{B}_{t}\right)_{ab} = \frac{1}{N\sqrt{\tilde{\boldsymbol{\Delta}}_{ab}}} \sum_{i: g(i) = a}(f_{t}^{a})^{\prime}(x_{i}^{t}) .
\end{equation}
It follows that
\[
\mathrm{\boldsymbol{b}}_{t} \odot f_{t - 1}(\boldsymbol{x}^{t - 1}) = \bp\left(\tilde{f}_{t - 1}(\boldsymbol{r}^{t - 1})\boldsymbol{B}_{t}^{T}\right). 
\]
\end{proof}

As a consequence, the inhomogeneous state evolution equations in Theorem~\ref{theorem: AMP inhomogeneous} follow immediately from the state evolution equations of $(\bm{A}, \tilde{\mathcal{F}}(\bm{\Delta}), \bm{r}^0)$ discussed in \cite[Section~2.1]{8205391}. 
This follows from the observation that given the law of $\bm{r}^t$ in the high dimensional limit, the law of $\bm{x}^t = \bp( \bm{r}^t )$ is straightforward to compute. We define
\begin{equation}\label{eq:sigma_centered}
(\sigma_b^{t + 1})^2 :=	\sum_{a=1}^{q} \frac{c_{a}}{\tilde{\boldsymbol{\Delta}}_{ab}}\mathbb{E}\left[ (f^b_{t}(Z_{b}^{t}))^{2}\right] .     
\end{equation}
We will show that the distribution of the iterate $\bx_i^t$ is asymptotically normal with mean $0$ and variance $(\sigma_{g(i)}^t)^2$.  

\begin{lemma}[Behavior of the AMP iterates in the inhomogeneous setting with no spike] \label{lem:state_evoluton_centered}
    Suppose that Assumption~\ref{ass:limit} holds, and that $\mathbb{P}_0$ has a bounded second moment. Let $\phi: \R^2 \to \R$ be a $L$-pseudo-Lipschitz test functions satisfying \eqref{eq:pseudoLip}. For any $a \in [q]$, then the following limit holds almost surely
\[
\lim_{N \to \infty} \frac{1}{|C_a^N|} \sum_{i \in C_a^N} \phi( x_i^t,  x_i^\star ) = \mathbb{E}_{x_{0}^{\star}, Z} \phi(  \sigma_{a}^{t}Z,  x_0^\star ) 
\]
where $Z$ is an independent standard  Gaussian.
\end{lemma}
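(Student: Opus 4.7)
The plan is to reduce the problem to the matrix AMP via the embedding lemma just established and invoke \cite[Theorem~1]{8205391} for $\boldsymbol{r}^t = \bd(\bx^t)$. Once we have the asymptotic distribution of $\boldsymbol{r}_i^t \in \R^q$, the corresponding statement for $x_i^t = (r_i^t)_{g(i)}$ follows by extracting a single coordinate. The independence from $x_i^\star$ is immediate in this ``no spike'' setting, because the recursion \eqref{eqn: version} only involves the GOE matrix $\boldsymbol{A}$, which is independent of $\boldsymbol{x}^\star$ by construction.

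First, I would verify that the embedded system $(\boldsymbol{A}, \tilde{\mathcal{F}}(\bm{\Delta}), \boldsymbol{r}^0)$ meets the hypotheses of \cite[Theorem~1]{8205391}. The functions $\tilde f_t^i: \R^q \to \R^q$ are coordinate-wise rescalings of $f_t^{g(i)}$ by the constants $1/\sqrt{\tilde{\bm{\Delta}}_{g(i)j}}$, hence Lipschitz by Assumption~\ref{ass:limit}(2); their empirical law converges because $\tilde f_t^i$ depends on $i$ only through $g(i)$, combined with Assumption~\ref{ass:limit}(1); and the initial empirical covariance of $\tilde f_0(\boldsymbol{r}^0)$ has a well-defined limit by Assumption~\ref{ass:limit}(3).

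Applying the theorem then yields that each $\boldsymbol{r}_i^t$ is asymptotically distributed as a centered Gaussian $\mathcal{N}(0, \boldsymbol{T}^t)$ on $\R^q$, with $\boldsymbol{T}^t$ governed by the recursion
\[
(T^{t+1})_{jk} = \lim_{N \to \infty} \frac{1}{N} \sum_{i=1}^{N} \frac{\mathbb{E}[ f_t^{g(i)}(x_i^t)^2 ]}{\sqrt{\tilde{\bm{\Delta}}_{g(i)j} \tilde{\bm{\Delta}}_{g(i)k}}} = \sum_{a \in [q]} \frac{c_a}{\sqrt{\tilde{\bm{\Delta}}_{aj} \tilde{\bm{\Delta}}_{ak}}} \mathbb{E}[f_t^a(\sigma_a^t Z)^2],
\]
where we used the inductive identification $x_i^t = (r_i^t)_{g(i)} \sim \mathcal{N}(0, (T^t)_{g(i)g(i)})$ and set $(\sigma_a^t)^2 := (T^t)_{aa}$. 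Restricting to $j = k = b$ recovers exactly \eqref{eq:sigma_centered}, closing the induction.

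Finally, for $i \in C_a^N$, $x_i^t$ is the $a$-th coordinate of $\mathcal{N}(0, \boldsymbol{T}^t)$, hence marginally $\mathcal{N}(0, (\sigma_a^t)^2)$. Because the $x_i^\star$ are i.i.d.\ samples from $\mathbb{P}_0$ and independent of $\boldsymbol{A}$, they can be adjoined as exogenous side information: the lifted test function $(\boldsymbol{r}, s) \mapsto \phi(r_a, s)$ on $\R^q \times \R$ is still pseudo-Lipschitz, so the almost sure convergence of $|C_a^N|^{-1} \sum_{i \in C_a^N} \phi(x_i^t, x_i^\star)$ to $\mathbb{E}[\phi(\sigma_a^t Z, x_0^\star)]$ follows directly from the matrix AMP state evolution (combined with the empirical law of large numbers for the $x_i^\star$ within each block). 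The main obstacle is the bookkeeping required to identify the correct covariance recursion and verify that its diagonal entries reproduce \eqref{eq:sigma_centered}; once that algebraic matching is done, the rest of the proof is a direct application of the matrix AMP framework.
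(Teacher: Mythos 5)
Your proposal follows the same route as the paper's proof: embed the inhomogeneous recursion into the matrix AMP $(\bm{A}, \tilde{\mathcal{F}}(\bm{\Delta}), \bm{r}^0)$, invoke the state evolution of \cite[Theorem~1]{8205391} for the lifted iterates $\boldsymbol{r}^t = \bd(\bx^t)$, exploit the fact that $\tilde f_t^i$ depends only on the $g(i)$-th coordinate so the covariance matrices close on the diagonal entries $(\sigma_a^t)^2$, and project to recover \eqref{eqn: state_evolution}. You are, if anything, slightly more explicit than the paper on one point: you note that the $x_i^\star$ can be adjoined as exogenous i.i.d.\ side information, independent of $\bm{A}$, in the pseudo-Lipschitz test function, whereas the paper's proof leaves the joint convergence over $(x_i^t, x_i^\star)$ implicit and only derives the variance recursion.
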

\begin{proof}
    In matrix-AMP \cite[Th.~1]{8205391},  the marginals of the iterates $\bm{r^t}$ from  $(\bm{A}, \tilde{\mathcal{F}}(\bm{\Delta}), \bm{r}^0)$ are approximately Gaussian and encoded by the positive definite matrices
\begin{eqnarray}
	\widehat{\boldsymbol{\Sigma}}_{a}^{t}=\mathbb{E}\left[\tilde{f}_{t}^{i}\left(\boldsymbol{Z}^{t}\right) \tilde{f}_{t}^{i}\left(\boldsymbol{Z}^{t}\right)^{\top}\right] \qquad \text{for all $i \in C_a^N$},\\
 {\rm with}\,\,\boldsymbol{Z}^{t} \sim N(0, \boldsymbol{\Sigma}^{t})\,\, \rm{and}\,\,
\label{eqn: linear_comb}
	\boldsymbol{\Sigma}^{t + 1}=\sum_{a=1}^{q} c_{a} \widehat{\boldsymbol{\Sigma}}_{a}^{t}.
\end{eqnarray}
We now show that $a \in [q]$ and $i \in C_a^N$, $\widehat{\boldsymbol{\Sigma}}_{a}^{t}$ depends only on $(\sigma_{a}^{t})^{2}  = \boldsymbol{\Sigma}_{aa}^{t}$. Indeed, by the definition of $\tilde{f}_{t}^{i}$ we have 
\begin{equation} \label {eqn: sigma_hat}
	\widehat{\boldsymbol{\Sigma}}_{a}^{t} (k, l) = \mathbb{E}\left[(\tilde{f}_{t}^{i}\left(\boldsymbol{Z}^{t}\right))_{k} (\tilde{f}_{t}^{i}\left(\boldsymbol{Z}^{t}\right))_{l}\right] = \mathbb{E}\left[ \frac{1}{\sqrt{\tilde{\boldsymbol{\Delta}}_{ak}}}  \frac{1}{\sqrt{\tilde{\boldsymbol{\Delta}}_{al}}} (f_{t}^{a}(Z_{a}^{t}))^{2}\right],
\end{equation} 
where $Z_{a}^{t} \sim N(0, (\sigma_{a}^{t})^{2})$ is the $a$th component of the Gaussian vector $\boldsymbol{Z}_{t}$. The key observation here is that by construction our function $\tilde{f}_{t}^{i}, \mathbb{R}^{q} \mapsto \mathbb{R}^{q}$ depends only on the ith component $Z_{i}^{t}$ of the Gaussian vector $\boldsymbol{Z}^{t}$. 
To characterize the limiting distribution of $\boldsymbol{x}^{t} = \bp(\boldsymbol{r}^{t})$, we only need to keep track of the variances $(\sigma_{j}^{t})^{2}, j \in [N]$. Using \eqref{eqn: sigma_hat}, for a given $a \in [q]$ and $i \in C_{a}^{N}$ we write 
\begin{equation} 
	\widehat{\boldsymbol{\Sigma}}_{a}^{t} (g(j), g(j)) = \mathbb{E}\left[(\tilde{f}_{t}^{i}\left(\boldsymbol{Z}^{t}\right))_{g(j)} (\tilde{f}_{t}^{i}\left(\boldsymbol{Z}^{t}\right))_{g(j)}\right]  = \frac{1}{\tilde{\boldsymbol{\Delta}}_{ag(j)}}\mathbb{E}\left[ (f_{t}^{i}(Z_{i}^{t}))^{2}\right].
\end{equation} 
Finally, with \eqref{eqn: linear_comb} we get that for any $b \in [q]$ and any $j \in C_b^N$, 
using $Z_{a}^{t} \sim N(0, (\sigma_{a}^{t})^{2})$,
\begin{equation} \label{eqn: state_evolution}
	(\sigma_{j}^{t + 1})^{2} = (\sigma_b^{t + 1})^2  = \boldsymbol{\Sigma}_{bb}^{t + 1} = \sum_{a=1}^{q} c_{a} \widehat{\boldsymbol{\Sigma}}_{a}^{t}(g(j), g(j)) = \sum_{a=1}^{q} \frac{c_{a}}{\tilde{\boldsymbol{\Delta}}_{ag(j)}}\mathbb{E}\left[ (f_{t}^{a}(Z_{a}^{t}))^{2}\right]
\end{equation}
\end{proof}

\paragraph{\textbf{The inhomogeneous spiked Wigner model in the light of the AMP approach}}\label{sec:spiked}
We now generalize the state evolution equations from Lemma~\ref{lem:state_evoluton_centered} to spiked matrices with an inhomogenous noise profile as was stated in Theorem~\ref{theorem: AMP inhomogeneous}. This reduction via a change of variables is standard, see for example \cite[Lemma 4.4]{DBLP:journals/corr/DeshpandeAM15}. Remember that in the inhomogeneous version of the spiked Wigner model we observe the signal $\boldsymbol{x}^{\star}$ through an inhomogeneous channel:

\begin{equation} \label{eq:spikedWigner}
	\boldsymbol{Y} = \sqrt{\frac{1}{N}}\boldsymbol{x}^{\star}(\boldsymbol{x}^{\star})^{T} + \boldsymbol{A} \odot \sqrt{\boldsymbol{\Delta}}.
\end{equation}
Our AMP algorithm is defined with the following recursion:
\begin{equation} 
	\boldsymbol{x}^{t+1}= \bigg( \frac{1}{\sqrt{N}\boldsymbol{\Delta}} \odot \bm{Y} \bigg) f_{t}\left(\boldsymbol{x}^{t}\right)-\boldsymbol{\mathrm{b}}_{t} \odot f_{t-1}\left(\boldsymbol{x}^{t-1}\right)
\end{equation}
where $\boldsymbol{\mathrm{b}}_{t} = \frac{1}{\boldsymbol{\Delta}} f_{t}^{'} (\boldsymbol{x}^{t})$ and $f$ is encoded by the family of functions in Definition~\ref{def:inhomoAMP}. The main difference in contrast to the iteration \eqref{eqn: version} is that our data matrix $\bm{Y}$ is no longer a centered matrix, while $\frac{1}{\sqrt{\bm{\Delta}}} \odot \bm{A}$ is. We would like to reduce \eqref{eqn: AMP spike} to an iteration of the form \eqref{eqn: version} with respect to a different parameter $\bm{s}^t$ which is uniquely determined by $\bm{x}^t$
\[
\boldsymbol{s}^{t+1}=\bigg(\frac{1}{\sqrt{N}  \sqrt{\boldsymbol{\Delta}}} \odot \boldsymbol{A} \bigg) f_{t}\left(\boldsymbol{s}^{t}\right)-\boldsymbol{\mathrm{b}}_{t} \odot f_{t-1}\left(\boldsymbol{s}^{t-1}\right).
\]
Doing so will allow us to recover the limiting laws of the iterates from Lemma~\ref{lem:state_evoluton_centered}. This is done via a standard change of variables to recenter $\bm{Y}$. We will sketch the argument in this section, and defer the full proof of Theorem~\ref{theorem: AMP inhomogeneous} to the Appendix~\ref{app:theostate}. 

To simplify notation, we will often denote $f_{t}(\boldsymbol{x}^{t}) := \boldsymbol{\hat{x}}^{t}$. 
We proceed following the approach of \cite[Lemma~4.4]{DBLP:journals/corr/DeshpandeAM15}. We will rewrite \eqref{eqn: AMP spike} using the definition of Y to get
\begin{equation}\label{eq:xiterates1}
	\begin{aligned}
		\boldsymbol{x}^{t+1} &= \bigg( \frac{1}{\sqrt{N}\boldsymbol{\Delta}} \odot \boldsymbol{Y} \bigg) f_{t}\left(\boldsymbol{x}^{t}\right)-\boldsymbol{\mathrm{b}}_{t} \odot f_{t-1}\left(\boldsymbol{x}^{t-1}\right) \\
		&= \bigg( \frac{1}{\sqrt{N}\boldsymbol{\Delta}} \odot \boldsymbol{Y} \bigg) \boldsymbol{\hat{x}}^{t}-\boldsymbol{\mathrm{b}}_{t} \odot \boldsymbol{\hat{x}}^{t-1}\\
		&=\bigg( \frac{1}{N \boldsymbol{\Delta}} \odot \boldsymbol{x^{\star}}\boldsymbol{(x^{\star})^{T}} \bigg) \boldsymbol{\hat{x}}^{t} + \bigg( \frac{1}{\sqrt{N} \sqrt{\boldsymbol{\Delta}}} \odot \bm{A} \bigg) \boldsymbol{\hat{x}}^{t}-\boldsymbol{\mathrm{b}}_{t} \odot \boldsymbol{\hat{x}}^{t-1}.
	\end{aligned}
\end{equation}
If indices are independent, then by the strong law of large numbers one would expect that
\begin{equation}\label{eqn: SLLN}
	\left(\bigg( \frac{1}{N \boldsymbol{\Delta}} \odot \boldsymbol{x^{\star}}\boldsymbol{(x^{\star})^{T}} \bigg) \boldsymbol{\hat{x}}^{t} \right)_{j} = x_{j}^{\star} \sum_{a \in [q]} \sum_{i \in C_{a}^{N}} \frac{1}{N} \frac{x_{i}^{\star}\hat{x}_{i}^{t}}{\boldsymbol{\Delta}_{ji}}  \rightarrow x_{j}^{\star} \sum_{a \in [q]} \frac{c_{a}}{\boldsymbol{\Delta}_{ji_{a}}} \mathbb{E}[x_{0}^{\star}\hat{x}_{i_{a}}^{t}],
\end{equation} 
where $i_{a}$ is some index belonging to the group $C_{a}^{N}$ and $x_{0}^{\star}$ is a random variable distributed according to the prior distribution $\mathbb{P}_{0}$.  For $b \in [q]$ and $i \in C_b^N$, we define the block overlap $\mu_{b}^{t}$ using the recursion
\begin{equation} \label{eqn: mu}
	\mu^{t + 1}_{i} = \mu^{t + 1}_{b} =  \sum_{a \in [q]} \frac{c_{a}}{\tilde{\boldsymbol{\Delta}}_{ab}}  \mathbb{E}_{x_{0}^{\star}, Z}[x_{0}^{\star} f_{t}^{a}\left(\mu^{t}_{a}x_{0}^{\star} + \sigma^{t}_{a}Z\right)],
\end{equation}
where $Z$ is a standard Gaussian random variable independent from all others sources of randomness. Notice that \eqref{eqn: mu} is precisely the asymptotic behavior of the summation appearing in \eqref{eqn: SLLN} by Lemma~\ref{lem:state_evoluton_centered}, which is how we control \eqref{eqn: SLLN} in the rigorous proof.

We now make a change of variables and track the iterates
\begin{equation}\label{eq:s_iterates}
    \boldsymbol{s}^{0} = \boldsymbol{x}^{0} - \boldsymbol{\mu}^{0} \odot \boldsymbol{x}^{\star} \qquad \boldsymbol{s}^{t} = \boldsymbol{x}^{t} - \boldsymbol{\mu}^{t} \odot \boldsymbol{x}^{\star}, \quad t \geq 1
\end{equation}
where $\boldsymbol{\mu}^{0}$ is the vector of block overlaps of the initial condition $\boldsymbol{x}^{0}$ with the truth. We reduced the \eqref{eqn: AMP spike} iteration to the following iteration in which we easily recognize a version of \eqref{eqn: version}:
\begin{equation} \label{eqn: AMP change spike}
	\boldsymbol{s}^{t+1}= \left( \frac{1}{\sqrt{N} \sqrt{\boldsymbol{\Delta}}}  \odot \boldsymbol{A}\right) f_{t} \left(\boldsymbol{s}^{t} + \boldsymbol{\mu^{t}} \odot \boldsymbol{x}^{\star}\right)-\bm{\mathrm{b}}_{t} \odot f_{t-1} \left(\boldsymbol{s}^{t-1} + \boldsymbol{\mu^{t - 1}} \odot \boldsymbol{x}^{\star}\right)
\end{equation}
with the initial condition $\boldsymbol{s}^{0} = \boldsymbol{x}^{0} - \boldsymbol{\mu^{0}} \odot \boldsymbol{x}^{\star}$ and the Onsager term taken from \eqref{eqn: Onsager vector} is given by
\begin{equation}
	\bm{\mathrm{b}}_{t} = \frac{1}{\boldsymbol{\Delta}} f_{t}^{'} (\boldsymbol{s}^{t} + \boldsymbol{\mu}^{t} \odot \boldsymbol{x}^{\star}).
\end{equation}

Using Lemma~\ref{lem:state_evoluton_centered}, we can recover the asymptotic behavior of the iterates $\bm{x}^t$ given in \eqref{eqn: AMP spike} by computing the iterates $\boldsymbol{s}^{t} + \boldsymbol{\mu}_{t} \odot \boldsymbol{x}^{\star}$ where $\boldsymbol{s}^{t}$ follows \eqref{eqn: AMP change spike} and $\bm{\mu}_t$ satisfies \eqref{eqn: mu}. From this reduction we obtain the following  state evolution equations describing the behaviour of \eqref{eqn: AMP spike}:

\begin{enumerate}
	\item $x_{j}^{t} \approxeq \mu^{t}_{g(j)}x_{0}^{\star} + \sigma_{g(j)}^{t}Z$ for $j \in [N]$, where $Z \sim \mathcal{N}(0, 1)$
	\item $\mu^{t + 1}_{b} = \sum_{a \in [q]} \frac{c_{a}}{\tilde{\Delta}_{ab}} \mathbb{E}_{x_{0}^{\star}, Z}[x_{0}^{\star} f_{t}^{a}\left(\mu^{t}_{a}x_{0}^{\star} + \sigma^{t}_{a}Z\right)]$ with $x_{0}^{\star} \sim \mathbb{P}_{0}, Z \sim \mathcal{N} (0, 1)$
	\item $(\sigma_{b}^{t + 1})^{2} = \sum_{a=1}^{q} \frac{c_{a}} {\tilde{\Delta}_{ab}}\mathbb{E}_{x_{0}^{\star}, Z}\left[ (f_{t}^{a}(\mu^{t}_{a}x_{0}^{\star} +  \sigma_{a}^{t}Z))^{2}\right]$ with $x_{0}^{\star} \sim \mathbb{P}_{0}$, $Z \sim \mathcal{N} (0, 1).$
\end{enumerate}

This (informally) characterizes the limiting distribution of the state evolution of the iterates from the inhomogeneous AMP stated in Theorem~\ref{theorem: AMP inhomogeneous}. The main technical difficulty is the equality in \eqref{eqn: SLLN} and \eqref{eqn: mu} already uses the asymptotic distribution of the overlaps at finite $N$. This technical difficulty is dealt with in the full proof of Theorem~\ref{theorem: AMP inhomogeneous} in Appendix~\ref{app:theostate}. 

\paragraph{\textbf{Fixed-point equation of state evolution in the Bayes-optimal setting}}\label{sec:Bayes}

Suppose that we know the prior distribution $\mathbb{P}_{0}$ of $x_{0}^{\star}$. The Bayes-optimal choice for the denoising functions $f_{t}^{j}, j \in [N]$ is simply the expectation of $x_{0}^{\star}$ with respect to the posterior distribution,  
\begin{equation} \label{eqn: Bayes-optimal functions}
	f_{t}^{j} (r) = f_{t}^{g(j)} (r) = \mathbb{E}_{posterior}[x_{0}^{\star}| \mu^{t}_{g(j)}x_{0}^{\star} + \sigma_{g(j)}^{t}Z = r].
\end{equation}
Under this Bayes-optimal setting, we can simplify the equations obtained in the previous section and see that AMP estimator is indeed an optimal one by studying its fixed point.
\begin{proof}[Theorem~\ref{theorem: AMP fixed point}]
    For this choice of  $f_{t}^{j}$ the Nishimori identity (see for example \cite[Proposition~16]{lelargemiolanematrixestimation}) states that for $a \in [q]$ and $j \in C_a^N$,
\begin{equation}
	\tilde{\mu}_{a}^{t}:= \mathbb{E}_{x_{0}^{\star}, Z}[x_{0}^{\star}f_{t}^{a}(\mu^{t}_{a}x_{0}^{\star} + \sigma_{a}^{t}Z)] = \mathbb{E}\left[ (f_{t}^{a}(\mu^{t}_{a}x_{0}^{\star} +  \sigma_{a}^{t}Z))^{2}\right].
\end{equation}
In this setting, the state evolution equations from Theorem~\ref{theorem: AMP inhomogeneous} initialized at $\bm{\mu}^0 = \bm{\sigma}^0 = 0$  reduce to
\begin{equation} \label{eqn: State evolution Bayes-optimal}
	\begin{cases}
		\tilde{\mu}_{a}^{t} = \mathbb{E}_{x_{0}^{\star}, Z}[x_{0}^{\star}f_{t}^{a}(\mu^{t}_{a}x_{0}^{\star} + \sigma_{a}^{t}Z)] \\ 
		\mu^{t + 1}_{b} = \sum_{a \in [q]} \frac{c_{a}}{\tilde{\Delta}_{ab}}\ \tilde{\mu}_{a}^{t}, b \in [q] \\
		(\sigma_{b}^{t + 1})^{2} = \sum_{a \in [q]} \frac{c_{a}}{\tilde{\Delta}_{ab}}\ \tilde{\mu}_{a}^{t}, b \in [q].
	\end{cases}
\end{equation}
Remarkably with the Bayes-optimal choice of the denoising functions we have that for $t \geq 1$ for each block $b \in [q]$, $\mu^{t + 1}_{b} = (\sigma_{b}^{t + 1})^{2}$. Therefore a necessary and sufficient condition for an estimator to be a fixed point of the state evolution is to simply have its overlaps $\mu^{t}_{b}$ unchanged by an iteration of the state evolution. This translates into the following equation for the overlaps $\mu_{b}, b \in [q]$ 
\begin{equation} \label{eqn: fixed point}
	\mu_{b} = \sum_{a \in[q]} \frac{c_{a}}{\tilde{\Delta}_{ab}}\mathbb{E}_{x_{0}^{\star}, Z}[x_{0}^{\star}\mathbb{E}_{posterior}[x_{0}^{\star}|\mu_{a}x_{0}^{\star} + \sqrt{\mu_{a}}Z]].
\end{equation}
The result of Theorem~\ref{theorem: AMP fixed point} now follows immediately. 
\end{proof}


\section{A spectral method adapted to the inhomogeneous spiked Wigner model}
\label{sec:spec}
In this section, we will describe how one can use the convergence of the inhomogeneous AMP with a simple choice of denoiser to recover the BBP transition of spiked Wigner matrices.

\paragraph{\textbf{From AMP to a spectral method}}\label{sec:linear}
Remarkably, AMP and the state evolution machinery associated with it can help us design a simple spectral algorithm that matches the information-theoretic phase transition \cite[Remark~2.16]{AJFL_inhomo}. Recall that Theorem~\ref{theorem: AMP inhomogeneous} does not require the denoising functions $f_{t}$ to be Bayes-optimal, but can be applied to any Lipschitz family of functions. In this section, we analyze the state evolution for the family of identity functions, $f_{t}(x) = x$.  By Remark~\ref{rem:scaling}, we can assume that the entries of $\bm{x}^\star$ have unit variance. With this choice of denoising functions the AMP iteration will simply become:
\begin{align} \label{eqn: AMP identity denosing}
	\bm{x}^{t + 1} = \left( \frac{1}{\sqrt{N}\bm{\Delta}} \odot \bm{Y} \right) \bm{x}^{t} - \bm{\mathrm{b}}_{t} \odot \bm{x}^{t - 1} \quad \text{ where } \quad
	\bm{\mathrm{b}}_{t} = \frac{1}{\bm{\Delta}} f_{t}^{'} = \frac{1}{\bm{\Delta}} \begin{bmatrix}
		1 \\
		\vdots \\
		1
	\end{bmatrix}.
\end{align}
If we denote $\bm{B}_{t} = \operatorname{diag}( \bm{\mathrm{b}}_{t})$, it is easy to see that the fixed point of this iteration yields
\begin{equation}
	\bm{x} = \left( \frac{1}{\sqrt{N}\bm{\Delta}} \odot \bm{Y} \right) \bm{x} - \bm{B}_{t} \bm{x}
\end{equation}
so any $\bm{x}$ fixed by the AMP iteration \eqref{eqn: AMP identity denosing} must be an eigenvector of the matrix 
\begin{equation}\label{eq:transformedY}
	\bm{\tilde{Y}} = \left( \frac{1}{\sqrt{N}\bm{\Delta}} \odot \bm{Y} \right) - \bm{B}_{t} = \left(\frac{1}{\sqrt{N}\bm{\Delta}} \odot \bm{Y} \right) - \operatorname{diag}\left(\frac{1}{\bm{\Delta}} \begin{bmatrix}
		1 \\
		\vdots \\
		1
	\end{bmatrix} \right) .
\end{equation}
A simple spectral method consists in taking the principal eigenvector (associated to the largest eigenvalue) of the matrix $\frac{\bm{Y}}{\sqrt{N}\bm{\Delta}} - \bm{B}_{t}$, and this linear AMP is a quick way to find such an eigenvector. 

\paragraph{\textbf{Analysis of the spectral method using state evolution}} It is expected that the spectral algorithm described above behaves as the AMP iteration with identity denoising functions around its fixed point. Therefore we can analyze this spectral algorithm using state evolution machinery for the AMP iteration. In the case of identity functions we have $f_{t}^{a}(x) = x$ for all $a \in [q]$, so 
\begin{align}
	\mathbb{E}_{x_{0}^{\star}, Z}[x_{0}^{\star}f_{t}^{a}(\mu^{t}_{a}x_{0}^{\star} + \sigma_{a}^{t}Z)] &=  \mathbb{E}_{x_{0}^{\star}, Z}[x_{0}^{\star}(\mu^{t}_{a}x_{0}^{\star} + \sigma_{a}^{t}Z)] =  \mu^{t}_{a} \\
	\mathbb{E}_{x_{0}^{\star}, Z}[(f_{t}^{a}(\mu^{t}_{a}x_{0}^{\star} + \sigma_{a}^{t}Z))^{2}] &= 
	\mathbb{E}_{x_{0}^{\star}, Z}[(\mu^{t}_{a}x_{0}^{\star} + \sigma_{a}^{t}Z)^{2}] = \ (\mu^{t}_{a})^{2} + (\sigma^{t}_{a})^{2} \label{eq:variancesigma}
\end{align}
which transforms state evolution equations \eqref{eq:state} into the following simple form: 
\begin{align}
	\mu^{t + 1}_{b} = \sum_{a \in [q]} \frac{c_{a}}{\Delta_{ba}}  \mu^{t}_{a} \quad \text{and} \quad
	(\sigma^{t + 1}_{b})^{2} = \sum_{a \in [q]} \frac{c_{a}}{\Delta_{ba}} (\mu^{t}_{a})^{2} + (\sigma^{t}_{a})^{2}).
\end{align}

Rewriting the overlap state evolution in a matrix form we get for $\bm{c} = (c_a)_{a \in [q]}$ that
\begin{equation}
	\operatorname{diag}(\sqrt{\bm{c}}) \bm{\mu}^{t + 1} =  \operatorname{diag}(\sqrt{\bm{c}}) \frac{1}{\bm{\Delta}} \operatorname{diag}(\sqrt{\bm{c}})  \left(\operatorname{diag}(\sqrt{\bm{c}})\bm{\mu}^{t}\right).
\end{equation}
If $\lambda(\bm{\Delta}) =   \left\|\operatorname{diag}(\sqrt{\bm{c}}) \frac{1}{\bm{\Delta}} \operatorname{diag}(\sqrt{\bm{c}})\right\|_{op} < 1,$ then this is a contraction with respect to the Euclidean norm, so there is a unique fixed point at $0$ when $\lambda(\bm{\Delta}) < 1$.  There is instability if $\lambda(\bm{\Delta}) > 1$, so we conjecture that this transition corresponds to the BBP transition for \eqref{eq:transformedY} as stated in Conjecture~\ref{conjecture:BBP}.
\begin{remark}\label{rem:scaling}
    In general, we can let $\gamma = \mathbb{E}_{x_0^\star} [(x_0^\star)^2]$ and consider the normalized matrix 
\[
\bm{\bar{Y}} = \frac{\bm{Y}}{\gamma} = \sqrt{\frac{1}{N}} \frac{\boldsymbol{x}^{\star}(\boldsymbol{x}^{\star})^{T}}{\gamma} + \boldsymbol{A} \odot \frac{\sqrt{\boldsymbol{\Delta}}}{\gamma} = \sqrt{\frac{1}{N}}\boldsymbol{\bar x}^{\star}(\boldsymbol{\bar x}^{\star})^{T} + \boldsymbol{A} \odot \sqrt{\boldsymbol{\bar \Delta}}
\]
for $\bm{\bar{x}} = \frac{\bm{x}}{\sqrt{\gamma}}$ and $\boldsymbol{\bar \Delta} =  \frac{\bm{\Delta}}{\gamma^2}$. Notice that the entries of $\bm{\bar{x}}$ now have unit variance. Under this setting, the transition of the transformation in \eqref{eq:transformedY} applied to $\bar{Y}$, which appears in \eqref{eq:transformedYgenvariance}, has transition at
\[
\lambda(\bm{\bar \Delta}) =   \left\|\operatorname{diag}(\sqrt{\bm{c}}) \frac{1}{\bm{\bar \Delta}} \operatorname{diag}(\sqrt{\bm{c}})\right\|_{op} = \mathbb{E}_{x_0^\star} [(x_0^\star)^2]^2 \left\|\operatorname{diag}(\sqrt{\bm{c}}) \frac{1}{\bm{ \Delta}} \operatorname{diag}(\sqrt{\bm{c}})\right\|_{op} < 1
\]
which is the generalized SNR defined in \eqref{eq:SNR}. 
\end{remark}


\section{Acknowledgments}
We thank Alice Guionnet \& Lenka Zdeborov\'a for valuable discussions. We acknowledge funding from the ERC Project LDRAM:  ERC-2019-ADG	Project 884584, and by the Swiss National Science Foundation grant SNFS OperaGOST, $200021\_200390$.
\newpage

\newpage

\appendix

\section{Proof of Theorem~\ref{theorem: AMP inhomogeneous}}\label{app:theostate}

We now provide a rigorous proof of the result that was sketched in Section~\ref{sec:spiked}. This proof is essentially identical to \cite[Lemma~4.4]{DBLP:journals/corr/DeshpandeAM15}. Recall the iterates \eqref{eqn: AMP change spike} given by $\boldsymbol{s}^{0} = \boldsymbol{x}^{0} - \boldsymbol{\mu^{0}} \odot \boldsymbol{x}^{\star}$ and
\begin{equation}\label{eq:siterate}
\boldsymbol{s}^{t+1}= \left( \frac{1}{\sqrt{N}\sqrt{\boldsymbol{\Delta}}} \odot \boldsymbol{A} \right) f_{t} \left(\boldsymbol{s}^{t} + \boldsymbol{\mu^{t}} \odot \boldsymbol{x}^{\star}\right)-\bm{\mathrm{b}}_{t} \odot f_{t-1} \left(\boldsymbol{s}^{t-1} + \boldsymbol{\mu^{t - 1}} \odot \boldsymbol{x}^{\star}\right)     ,
\end{equation}
where $\boldsymbol{\mu}^{t} = (\mu_i^t)_{i \leq N} $ is given by the recursion
\[
\mu^{t + 1}_{i} = \mu^{t + 1}_{g(i)} =  \sum_{a \in [q]} \frac{c_{a}}{\tilde{\boldsymbol{\Delta}}_{ag(i)}}  \mathbb{E}_{x_{0}^{\star}, Z}[x_{0}^{\star} f_{t}^{a}\left(\mu^{t}_{a}x_{0}^{\star} + \sigma^{t}_{a}Z\right)]
\]
and $\boldsymbol{x}^{\star} = (x_i^\star)_{i \in [N]}$ is a vector with independent coordinates distributed according to $\mathbb{P}_0$.
By Lemma~\ref{lem:state_evoluton_centered}, for each $a \in [q]$, and any pseudo-Lipschitz function $\phi: \R \to \R \mapsto \R$ we have that almost surely
\begin{equation}\label{eq:pseudodouble}
\lim_{N \to \infty} \frac{1}{|C_a^N|} \sum_{i \in C_a^N} \phi( s_i^t, x_i^\star ) = \mathbb{E}_{x^\star_0,Z} \phi(  \sigma_{a}^{t}Z,x^\star_0 )     
\end{equation}
where
\[
(\sigma_b^{t + 1})^2 :=	\sum_{a=1}^{q} \frac{c_{a}}{\tilde{\boldsymbol{\Delta}}_{ab}}\mathbb{E}_Z \left[ (f^b_{t}(Z_{b}^{t}))^{2}\right] .
\]
as was defined in \eqref{eq:sigma_centered}. For any pseudo-Lipschitz function $\psi: \R \to \R$, we have $\phi(x,y) = \phi(x - \mu^t_a y)$ is also pseudo-Lipschitz, so \eqref{eq:pseudodouble} implies that
\begin{equation}\label{eq:limitingcentered}
\lim_{N \to \infty} \frac{1}{|C_a^N|} \sum_{i \in C_a^N} \psi( s_i^t + \mu_a^t  x_i^\star ) = \mathbb{E}_{x_0^\star,Z} \psi(  \sigma_{a}^{t}Z + \mu_a^t x_0^\star )    
\end{equation}
almost surely. 

Now let $\bx^t$ be the iterates from the spiked AMP iteration for the inhomogeneous Wigner matrix \eqref{eq:spikedWigner} we derived in \eqref{eq:xiterates1}
\begin{equation}\label{eq:xiterate}
\boldsymbol{x}^{t+1}= \bigg( \frac{1}{N \boldsymbol{\Delta}} \odot \boldsymbol{x^{\star}}\boldsymbol{(x^{\star})^{T}} \bigg) \boldsymbol{\hat{x}}^{t} + \bigg( \frac{1}{\sqrt{N} \sqrt{\boldsymbol{\Delta}}} \odot \bm{A} \bigg) \boldsymbol{\hat{x}}^{t}-\boldsymbol{\mathrm{b}}_{t} \odot \boldsymbol{\hat{x}}^{t-1}.  
\end{equation}
It now suffices to show that for fixed $t$ and all $a \in [q]$ that
\begin{equation}\label{eq:goal}
\lim_{N \to \infty} \frac{1}{|C_a^N|} \sum_{i \in C_a^N} ( \psi( s_i^t + \mu_a^t x_i^\star ) - \psi( x_i^t) ) = 0    
\end{equation}
almost surely. This will imply that $ s_i^t + \mu_a^t x_i^\star$ and $ x_i^t$ have the same asymptotic distribution which finish the proof of Theorem~\ref{theorem: AMP inhomogeneous} by \eqref{eq:limitingcentered}.

We now prove \eqref{eq:goal}. Since $\psi$ is $L$-pseudo-Lipschitz we have
\begin{align*}
| \psi( s_i^t + \mu_a^t x_i^\star ) - \psi( x_i^t) | &\leq L ( 1 + |s_i^t + \mu_a^t x_i^\star| + | x_i^t|  ) | s_i^t + \mu_a^t x_i^\star - x_i^t|    
\\&\leq 2 L | s_i^t + \mu_a^t x_i^\star - x_i^t|   ( 1 + |s_i^t + \mu_a^t x_i^\star|  + | s_i^t + \mu_a^t x_i^\star - x_i^t|    ) .
\end{align*}
The Cauchy--Schwarz inequality implies that
\begin{align*}
    &\bigg| \frac{1}{|C_a^N|} \sum_{i \in C_a^N} ( \psi( s_i^t + \mu_a^t x_i^\star ) - \psi( x_i^t) ) \bigg| 
    \\&\leq \frac{2 L}{C_a^N}   (\sqrt{C_a^N} \| \bm{s}_a^t + \mu_a^t \bm{x}_a^\star - \bm{x}_a^t\|_2 + \|\bm{s}_a^t + \mu_a^t \bm{x}_a^\star\|_2 \| \bm{s}_a^t + \mu_a^t \bm{x}_a^\star - \bm{x}_a^t\|_2  + \| \bm{s}_a^t + \mu_a^t \bm{x}_a^\star - \bm{x}_a^t\|_2^2    )  
\end{align*}
where $\bm{s}_a^t = (s_i^t)_{i \in C_a^N} \in \R^{|C_a^N|}$ , $\bm{x}_a^t = ( x_i^t )_{i \in C_a^N} \in \R^{|C_a^N|}$. Therefore, to prove \eqref{eq:goal} it suffices to prove that for all $t \geq 0$,
\begin{eqnarray}
\lim_{N \to \infty}  \frac{1}{|C_a^N|} \| \bm{s}_a^t + \mu_a^t \bm{x}_a^\star - \bm{x}_a^t\|^2_2  \to 0 \label{eq:firstest}\\
\limsup_{N \to \infty} \frac{1}{|C_a^N|} \|\bm{s}_a^t + \mu_a^t \bm{x}_a^\star\|^2_2 \to 0 \label{eq:secondest}
\end{eqnarray}

Clearly, if we initialize $\bm{x}^0$, $\bm{s}^0$ at zero then \eqref{eq:firstest} and \eqref{eq:secondest} are satisfied by our state evolution equations \eqref{eq:state}.
Notice that \eqref{eq:secondest} follows directly from \eqref{eq:limitingcentered} applied to the square function. We use here that we assumed that the second moment of $x^\star$ is finite.

We now focus on proving \eqref{eq:firstest} through strong induction. By definition of the iterates \eqref{eq:siterate} and \eqref{eq:xiterate}, 
\begin{align*}
    &(\bm{s}_a^t + \mu_a^t \bm{x}_a^\star - \bm{x}_a^t)
    \\&= \bigg[ \left( \frac{1}{\sqrt{N}\sqrt{\boldsymbol{\Delta}}} \odot \boldsymbol{A} \right) f_{t-1} \left(\boldsymbol{s}^{t-1}     + \boldsymbol{\mu^{t-1}} \odot \boldsymbol{x}^{\star}\right) - \bigg( \frac{1}{\sqrt{N} \sqrt{\boldsymbol{\Delta}}} \odot \bm{A} \bigg)f_{t-1} (\bm{x}^{t-1})
    \\&\quad + \mu^t \odot \bx^\star - \bigg( \frac{1}{N \boldsymbol{\Delta}} \odot \boldsymbol{x^{\star}}\boldsymbol{(x^{\star})^{T}} \bigg) f_{t-1} (\bm{x}^{t-1})
    \\&\quad + \boldsymbol{\mathrm{b}}^x_{t-1} \odot f_{t - 2}(\boldsymbol{x}^{t-2}) - \bm{\mathrm{b}}^s_{t-1} \odot f_{t-2} \left(\boldsymbol{s}^{t-2} + \boldsymbol{\mu^{t - 2}} \odot \boldsymbol{x}^{\star}\right) \bigg]_{i \in C_a^N}
\end{align*}
where $[\cdot]_i$ corresponds to the $i$th row of a vector and $\boldsymbol{\mathrm{b}}^x_{t-1}$ and $\bm{\mathrm{b}}^s_{t}$ are the Onsager terms defined in \eqref{eqn: Onsager vector} with respect to $\bm{x}^{t-1}$ and $\bm{s}^{t-1}$ respectively. 
The Cauchy--Schwarz inequality and Jensen's inequality imply that there exists some universal constant $C$ such that
\begin{align*}
   & \frac{1}{|C_a^N|}\| \bm{s}_a^t + \mu_a^t \bm{x}_a^\star - \bm{x}_a^t\|^2_2  
    \\&\leq \frac{C}{|C_a^N|}  \sum_{i \in C_a^N} \frac{1}{N} \left\|  \bigg[\frac{1}{\sqrt{\boldsymbol{\Delta}}} \odot \boldsymbol{A} \bigg]_i \right\|_2^2 \| [ f_{t-1} \left(\boldsymbol{s}^{t-1}     + \boldsymbol{\mu^{t-1}} \odot \boldsymbol{x}^{\star}\right) - f_{t-1} (\bm{x}^{t-1}) ]_i \|_2^2
    \\&\quad + \frac{C}{|C_a^N|}  \sum_{i \in C_a^N} \bigg(  \mu_a^t  - \bigg[ \frac{1}{N \boldsymbol{\Delta}}  (f_{t-1} (\bm{x}^{t-1}) \odot \bm{x}^\star) \bigg]_i \bigg)^2 (x^\star_i)^2
    \\&\quad + \frac{C}{|C_a^N|}  \sum_{i \in C_a^N}  ([\mathrm{b}^x_{t-1}]_i - [\mathrm{b}^s_{t-1}]_i)^2 [f_{t - 2}(\boldsymbol{s}^{t-2} + \boldsymbol{\mu^{t - 2}} \odot \boldsymbol{x}^{\star})]^2_i 
    \\&\quad + \frac{C}{|C_a^N|}  \sum_{i \in C_a^N}  [\bm{\mathrm{b}}^x_{t-1}]^2_i ( [f_{t - 2}(\boldsymbol{x}^{t-2})]_i - [f_{t - 2}(\boldsymbol{s}^{t-2} + \boldsymbol{\mu^{t - 2}} \odot \boldsymbol{x}^{\star})]_i)^2
\end{align*}
We now control each term separately. 
\begin{enumerate}
    \item To control the first term, notice that the matrix $\frac{1}{N} \bigg[\frac{1}{\sqrt{\boldsymbol{\Delta}}} \odot \boldsymbol{A} \bigg]$ has iid entries within blocks and the sizes of the blocks diverge with the dimension, so we can control the sums of the squares of within each block using standard operator norm bounds \cite{AGZ}. The first term vanishes in the limit because $f$ is pseudo-Lipschitz so we can apply the induction hypothesis bound which controls \eqref{eq:firstest} at time $t - 1$. 

\item To control the second term, notice that for $i \in C_a^N$ by Lemma~\ref{lem:state_evoluton_centered} applied to the pseudo-Lipschitz function $y f_{t-1}(x)$ that 
\[
\bigg[ \frac{1}{N \boldsymbol{\Delta}}  (f_{t-1} (\bm{x}^{t-1}) \odot \bm{x}^\star) \bigg]_i \to \mu_a
\]
almost surely. This implies that the average of such terms vanishes since we assumed that the second moment $\mathbb{E} [x_0^\star]^2$ is finite.

\item To control the third and fourth terms, we can expand the definition of the Onsager terms and use the assumption that $f'$ is pseudo-Lipschitz and almost surely bounded. Both terms vanish because our strong induction hypothesis gives us control of \eqref{eq:firstest} at time $t - 2$. 
\end{enumerate}
Since all terms vanish in the limit, we have proven \eqref{eq:firstest} for all $a \in [q]$, which finishes the proof of statement \eqref{eq:goal} and the proof of Theorem~\ref{theorem: AMP inhomogeneous}.

\begin{figure}[t]
	\begin{center}
        
        		\includegraphics[width=6cm]{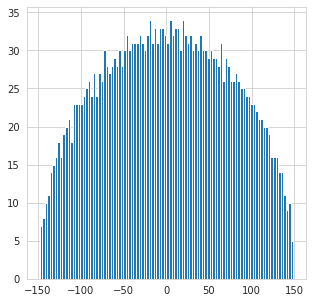}
          \includegraphics[width=6cm]{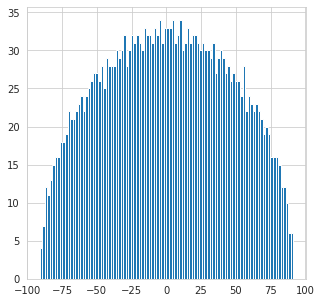}

		\caption{Illustration of the spectrum of $ Y \in \R^{2500 \times 2500}$ evaluated at noise profiles with {\rm snr} $\lambda(\bm{\Delta}) = 0.7$ (left, before the transition) and on the left and $1.8$ on the right (after the transition). There is no outlying eigenvalue in contrast to the transformed matrix: the transition for a naive spectral method is sub-optimal.}
  \label{fig:spectrumY}
	\end{center}
\end{figure}
\begin{figure}[h]
	\begin{center}
        
        		\includegraphics[width=6cm]{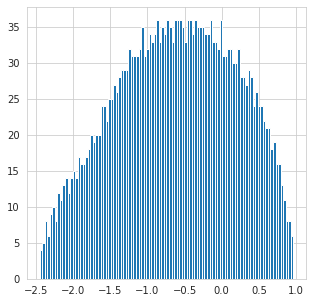}
          \includegraphics[width=6cm]{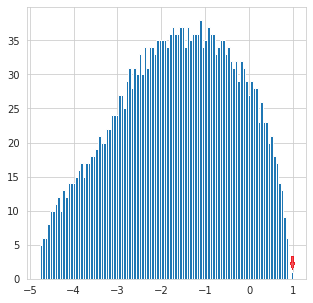}

		\caption{Illustration of the spectrum of $\tilde Y \in \R^{2500 \times 2500}$ evaluated at noise profiles with {\rm snr} $\lambda(\bm{\Delta}) = 0.7$ (left, before the transition) and on the left and $1.8$ on the right (after the transition), with the outlying eigenvector correlated with the spike arises at eigenvalue one. This is at variance with the results of the naive method in Fig.\ref{fig:spectrumY}}
  \label{fig:spectrumtilde}
	\end{center}
\end{figure}

\section{Comparison with a naive PCA spectral method}
In this appendix, we wish to show how the spectral method we propose differs, in practice, from a naive PCA. We provide an example of the spectrums of $\bm{Y}$ and $\bm{\tilde {Y}}$ before and after the transition at $\rm{SNR}(\bm{\Delta}) = 1$. In  Figure~\ref{fig:spectrumY} there is no clear separation of the extremal eigenvalue of $\bm{Y}$ from the bulk around this transition. This is in contrast to  Figure~\ref{fig:spectrumtilde} where there is an extremal eigenvalue of $\bm{\tilde Y}$ appearing at value one.

\end{document}